\documentclass[10pt,twocolumn,letterpaper]{article}
\usepackage{silence}
\usepackage{cvpr}
\usepackage{times}
\usepackage{epsfig}
\usepackage{graphicx}
\usepackage{amsmath}
\usepackage{amssymb}
\usepackage{cite}
\usepackage{dsfont}
\usepackage[colorlinks,linkcolor=blue]{hyperref}
\usepackage[nameinlink,capitalise]{cleveref}

\usepackage{float}
\usepackage{amsthm}
\usepackage{booktabs}
\usepackage{multirow}
\usepackage{siunitx}
\usepackage{tabularx,multirow,booktabs,blindtext}
\newtheorem{thm}{Theorem}

% Include other packages here, before hyperref.

% If you comment hyperref and then uncomment it, you should delete
% egpaper.aux before re-running latex.  (Or just hit 'q' on the first latex
% run, let it finish, and you should be clear).

\usepackage{multirow}

% \cvprfinalcopy % *** Uncomment this line for the final submission

 % *** Enter the CVPR Paper ID here

% Pages are numbered in submission mode, and unnumbered in camera-ready
\ifcvprfinal\pagestyle{empty}\fi
\begin{document}

%%%%%%%%% TITLE
\title{Multiview Cross-supervision for Semantic Segmentation}

\author{Yuan Yao\\
University of Minnesota\\
{\tt\small yaoxx340@umn.edu}
% For a paper whose authors are all at the same institution,
% omit the following lines up until the closing ``}''.
% Additional authors and addresses can be added with ``\and'',
% just like the second author.
% To save space, use either the email address or home page, not both
\and
Hyun Soo Park\\
University of Minnesota\\
{\tt\small hspark@umn.edu}
}

\maketitle
%\thispagestyle{empty}

%%%%%%%%% ABSTRACT
\begin{abstract}
This paper presents a semi-supervised learning framework for a customized semantic segmentation task using multiview image streams. A key challenge of the customized task lies in the limited accessibility of the labeled data due to the requirement of prohibitive manual annotation effort. We hypothesize that it is possible to leverage multiview image streams that are linked through the underlying 3D geometry, which can provide an additional supervisionary signal to train a segmentation model. We formulate a new cross-supervision method using a shape belief transfer---the segmentation belief in one image is used to predict that of the other image through epipolar geometry analogous to shape-from-silhouette. The shape belief transfer provides the upper and lower bounds of the segmentation for the unlabeled data where its gap approaches asymptotically to zero as the number of the labeled views increases. We integrate this theory to design a novel network that is agnostic to camera calibration, network model, and semantic category and bypasses the intermediate process of suboptimal 3D reconstruction. We validate this network by recognizing a customized semantic category per pixel from realworld visual data including non-human species and a subject of interest in social videos where attaining large-scale annotation data is infeasible. 
\end{abstract}

%%%%%%%%% BODY TEXT
\section{Introduction}
In aid of large-scale visual data, convolutional neural networks (CNN) have been transforming the level of understanding of pixels, which allows deep reasoning about their spatial extent and semantic meaning (e.g., human, bicycle, and horse)~\cite{Long:2017,deeplabv3plus2018,he:2017,chen:2017}. Looking ahead, these models are expected to solve various unprecedented visual tasks customized for our personal data (e.g., recognizing pixels of my daughter among her classmates from a collection of her school play photos). However, such task customization is fundamentally limited by the ability to access the training labels for the personal data. Existing semantic segmentation approaches are mostly built upon the per-pixel semantic label manually annotated by thousands of the crowd workers such as MS COCO~\cite{lin:2014} that constitutes 2.5 millions of segmentation instances. Unfortunately, attaining such large annotations for the customized segmentation task is often infeasible, which introduces a large bias in the trained model because the required number of the training data is known to be equivalent to that of the perceptrons~\cite{zhu:2015}.

In the meantime, as a small form factor of cameras accelerates a seamless integration into our daily lives, now many scenes are recorded by multiple cameras (e.g., Amazon Cloud Cam and Nest Cam), and they will permeate more and deeper. Notably, there is an emerging trend of social videos~\cite{arev:2014,park:2012,fathi:2012,tali:2012}---a collection of videos that record an activity of interest (e.g., political rally, concert, and wedding) from social members at the same time\footnote{There exist multiple online repositories such as Rashomon Project~\cite{rashomon} and CrowdSync cellphone app~\cite{crowd} that host the social videos.}. These cameras readily produce terascale multiview image streams, which opens up a new opportunity to address the annotation challenge for a customized task. In this paper, we formulate a new multiview theory for semi-supervised semantic segmentation to train a CNN from the limited number of the labeled data ($<$15\%) by leveraging the multiview image streams.

\begin{figure}[t!]
  \begin{center}
    \includegraphics[width=0.47\textwidth]{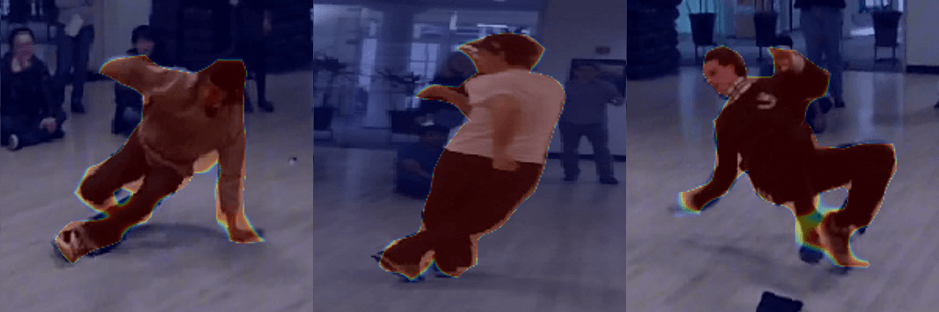}
  \end{center}
     \vspace{-3mm}
  \caption{We design a semi-supervised learning to train a semantic segmentation model using multiview cross-supervision based on shape transfer. This enables customizing a semantic segmentation task, e.g., a b-boy dancer segmentation from social cameras.
%   \yasamin{MONET Pose Estimation on Monkey subjects. MONET trains a semi-supervised network by minimizing epipolar divergence from multiview RGB images.}
  }
  \label{Fig:intro}
  %\vspace{3mm}
\end{figure}

A key property of multiview images is that they are linked through the underlying 3D geometry, which can be beneficial for training a segmentation model. However, the representations used for 3D reconstruction and CNNs often mismatch, i.e., vector vs. raster representations, which makes a tight integration of 3D geometric knowledge into the process of the network training challenging. Existing methods take either a) the approach that alternates between offline 3D reconstruction and training~\cite{simon:2017,Vijayanarasimhan:2017,Byravan:2016,bertasius:2016_unsupervised}; or b) the approach that predicts the 3D geometry from a single view image with additional depth supervision~\cite{kanazawaHMR18,drcTulsiani17,zhou2017unsupervised}. The main limitation of these approaches is that their performance is in principle bounded by the reconstruction quality, which is often suboptimal.

Instead, we present a new multiview learning theory for a customized semantic segmentation task that integrates 3D geometry into the process of segmentation model training, which bypasses the intermediate reconstruction. We introduce a {\em shape belief transfer}---the segmentation belief in one image is used to predict that of the other image through epipolar geometry. We formulate this shape belief transfer as an inverse problem of shape-from-silhouette~\cite{Laurentini:1994,Laurentini:1997,Guan:2006,djelouah:hal-01115557} that reconstructs a 3D object volume (visual hull) from the foreground segmentation of multiview images~\cite{matusik:2000,Kutulakos:2000}. The shape belief transfer is a composition of two belief transfers: (a) 3D shape reconstruction by triangulating the segmentation probability in multiview source images; and (b) 2D projection of the reconstructed 3D shape onto a target view to approximate its segmentation probability. We derive that these two transfers can be combined in a differentiable fashion, and therefore, the end-to-end training is possible. This allows relating the segmentation across views where the unlabeled data can be cross-supervised by the labeled data.

A new theory of the shape belief transfer is derived, which provides the upper and lower bounds of the segmentation for the unlabeled data where its gap approaches asymptotically to zero as the number of the labeled views increases. We further show that the shape belief transfer can be implemented by incorporating stereo rectification that transforms the operation of 2D projection into max-pooling operation to gain significant computational efficiency. Based on the theory, we design a triplet network that takes as input multiview image streams with the limited number of the labeled data and outputs a semantic segmentation model that can reliably predict on the unlabeled data as shown in \Cref{Fig:intro}. The network is trained by minimizing the geometric inconsistency of multiview segmentation, resulting in multiview cross-supervision.

% In this paper, we design a triple network which takes as input multiview image streams with the limited number of the labeled data and outputs a semantic segmentation model that can reliably predict on the unlabeled data. 

% We construct segmentation upper-bound constraints based on visual hull for unlabeled data using all labeled views as a segmentation prior. The network minimizes the error between the output and annotations in its first branch, and at the same time, its additional label-agnostic triplet branches learn to reduce upper-bound violations of any data prediction, which is the visual hull constructed using the predictions of any other two views. 

This framework is flexible: (1) segmentations can be customized as it does not require a pre-trained model, i.e., we train a segmentation model from scratch with manual annotations for each sequence; (2) it can be built on any semantic segmentation design such as DeepLab~\cite{deeplab}, SegNet~\cite{segnet:2015}, and Mask R-CNN~\cite{he:2017} that generates a distribution (heatmap) for each object class; (3) it can apply to general multi-camera systems (e.g., different multi-camera rigs, number of cameras, and intrinsic parameters). We validate this network by recognizing a customized semantic category per pixel from realworld visual data including non-human species and a subject of interest in social videos where attaining large-scale annotation data is infeasible. Also it quantitatively outperforms the the existing models without cross-view supervision and the model trained with annotations and shape prior in terms of accuracy and precision.

\section{Related Work}
This work lies in the intersection of semantic segmentation and multiview self-supervision, which enables learning from a small set of the labeled data possible. We briefly review these two area of study. 

\noindent\textbf{Semantic Segmentation} Semantic segmentation has been notorious for its computational complexity~\cite{fulkerson:2009} caused by spatial interactions between pixels. A seminal work by Long et al.~\cite{long:2015} has shown that such complex relationship can be effectively learned by a CNN (i.e., fully convolutional network) that encodes high level visual semantics. Albeit impressive, due to the limited network capacity and low resolution, the segmentation results misses object boundary details. Many subsequent studies have integrated a conditional random field or Markov random field~\cite{bertasius:2016,chen:2015,qi:2015} that can jointly optimize the object boundary and region. Another approach is to leverage devolutional layers similar to variational autoencoder to reconstruct full resolution segmentation~\cite{hong:2015,noh:2015}. Such advancement produces a variety of applications such as graphics~\cite{oh:2017,ren:2017,li:2018,Aksoy:2018}, autonomous driving~\cite{kundu:2014,kundu:2016}, and first-person vision~\cite{Fathi:2011}. When multiple images are used, co-segmentation is possible, i.e., segmenting common objects. Most approaches often leverage individual segmentation to correlate their visual features~\cite{kim:2012, Mukherjee:2018,Li:2018a}. Notably, multiview co-segmentation has been studied by using multiview stereo~\cite{kowdle:2012,Djelouah:2013}. Unlike these methods, our approach train a semantic segmentation networks using multiview geometry without reconstructing 3D objects, which is not sensitive to the stereo matching error.

\noindent\textbf{Multiview Self-supervision}
Learning a view invariant representation is a long-standing goal in visual recognition research, which requires to predict underlying 3D structure from a single view image. Geometrically, it is an ill-posed problem while two data driven approaches have made promising progress. (1) Direct 3D-2D supervision: for a few representative objects such as furniture~\cite{lim:2014}, vehicles~\cite{xiang_wacv14}, and human body~\cite{SMPL:2015}, their 3D models (e.g., CAD, point cloud, and mesh) exist where the 3D-2D relationship can be directly regressed. The 3D models can produce a large image dataset by projecting onto all possible virtual viewpoints where the object's pose and shape can be learned from 3D-2D pairs. This 3D model projection can be generalized to scenes measured by RGBD data~\cite{Geiger2012CVPR,Vijayanarasimhan:2017,Byravan:2016,ladisky:2014,su:2017} and graphically generated photo-realistic scenes~\cite{chen:2017,richter:2016} where visual semantics associated with 3D shape can be encoded. (2) Indirect supervision via non-rigid graph matching: to some extent, it is possible to infer the common shape and appearance from a set of single view image instances without 3D supervision. For instance, tables have a common shape expressed by four legs and planar top. Such holistic spatial relationship can be unveiled by casting it as a graph matching problem where local shape rigidity and appearance models can describe the relationship between nodes and edges~\cite{cho2013,zhou:2013,berg:2005,caetano:2009,leordeanu:2007,torresani:2008}. Further, leveraging a underlying geometric constraint between instances (e.g., cyclic consistency~\cite{Zhou:2015,zhou2016learning}, volumetric projection~\cite{drcTulsiani17,choy20163d,factored3dTulsiani17}, and kinematic chain~\cite{yan:2006,Shaji:2010,torresani:2008}) can extend the validity of graph matching. These existing approaches require many correspondences between domains that are established by manual annotations. In contrast, our approach will leverage self-supervision via multiview geometry to adapt to a novel scene with minimal manual efforts.

\begin{figure}[t]
\begin{center}
%\fbox{\rule{0pt}{2in} \rule{0.9\linewidth}{0pt}}
   \includegraphics[width=0.8\linewidth,height=6cm]{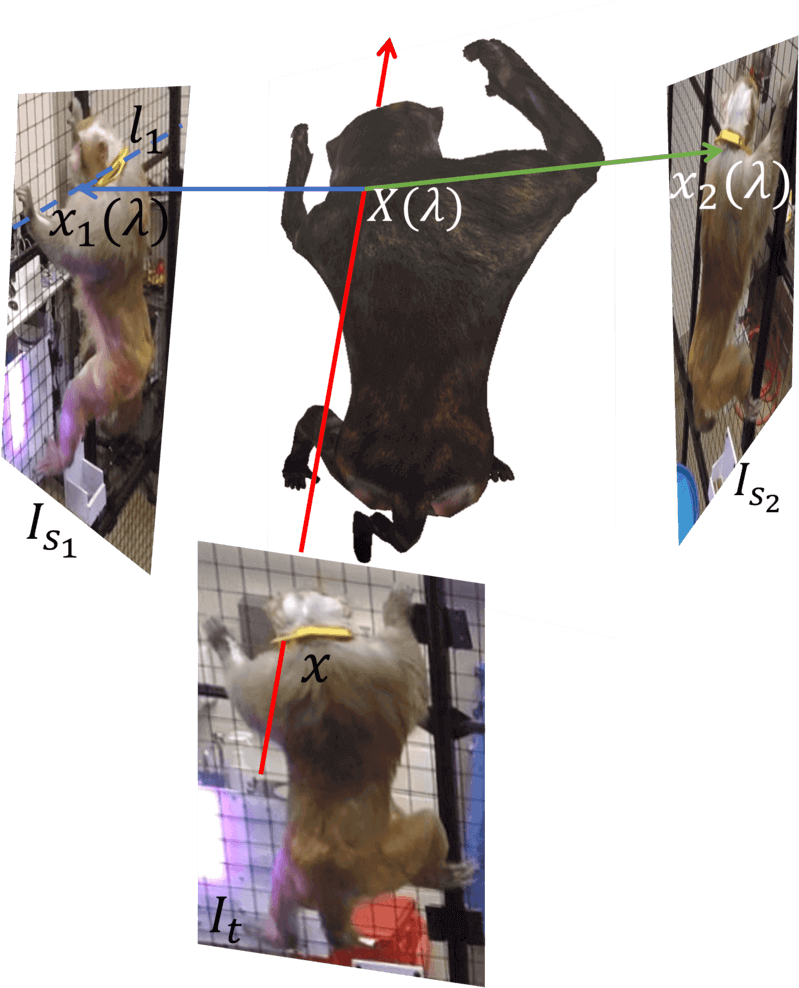}
\end{center}
   \caption{The 3D points and the corresponding 2D points on their epipolar lines can be parameterized with $\lambda$.}
\label{fig:figure}
\end{figure}
%------------------------------------------------------------------------
\section{Multiview Cross-view Supervision}

We present a semi-supervised learning framework for training a semantic segmentation model by leveraging multiview images streams where $\eta = \frac{|\mathcal{D}_L|}{|\mathcal{D}_L|+|\mathcal{D}_U|} \ll 1$ where $\mathcal{D}_L$ and $\mathcal{D}_U$ are the labeled and unlabeled data, respectively. We formulate a novel theory of rasterized multiview geometry that enforces the geometric consistency by minimizing the reprojection error of a 3D visual hull, resulting in a differentiable loss function to train a neural network. Note that we will focus on binary segmentation for a proof of concept while the multiview theory can be applied to multi-way segmentation. Also the the framework is agnostic to the design of segmentation networks where state-of-the-art models~\cite{deeplab,long:2015} can be used with a trivial modification similar to MONET~\cite{jafarian:2018}.

% enforces geometric consistency by minimizing the the segmentation area constraint violations (Section~\ref{Sec:visualhull}), allowing a cross-view supervision. 

Consider a network model that takes an input image $\mathcal{I}$ and outputs the class probabilities for each pixel, i.e., $\phi(\mathcal{I};\mathbf{w}) \in [0,1]^{W \times H \times C}$ where $W$ and $H$ are the width and height of the output distribution, respectively, and $C$ is the number of object classes. We consider binary segmentation, i.e., $|C|=2$. 

The network is parametrized by the weight $\mathbf{w}$ learned by minimizing the following loss:
\begin{align}
    \underset{\mathbf{w}}{\operatorname{minimize}}~~\mathcal{L}_L + \lambda_s \mathcal{L}_S + \lambda_p \mathcal{L}_P, \label{Eq:final}
\end{align}
where $\mathcal{L}_L$, $\mathcal{L}_S$, and $\mathcal{L}_P$ are the losses for labeled supervision, multiview cross-view supervision, and bootstrapping prior, and $\lambda_s$ and $\lambda_p$ are the weights that control their importance.  

For the labeled data, we use the sum of pixelwise cross entropy to measure the segmentation loss:
\begin{align}
    \mathcal{L}_L = -\sum_{j\in \mathcal{D}_L} \sum_{\mathbf{x}\in X} y_j(\mathbf{x})\log \left.\phi(\mathcal{I}_j)\right|_\mathbf{x},
\end{align}
where $y_j(\mathbf{x})\in \{0,1\}$ is the ground truth semantic label of the $j^{\rm th}$ labeled data at pixel location $\mathbf{x}$, and $X$ is the domain of $\mathbf{x}$. 

%-------------------------------------------------------------------------
\subsection{Shape Transfer} \label{Sec:visualhull}

Inspired by the image based shape from silhouette~\cite{matusik:2000}, we study the segmentation transfer through a 3D shape. Consider a point $\mathbf{x}\in \mathds{R}^2$ in the target image $\mathcal{I}_t$. Without loss of generality, the camera projection matrix of the target image is set to $\mathbf{P} = \mathbf{K} \left[\begin{array}{cc}\mathbf{I}_3 & \mathbf{0}\end{array}\right]$ where $\mathbf{K}$ is the intrinsic parameter. The point in an image is equivalent to a 3D ray $\mathbf{L}_\mathbf{x}\propto\mathbf{K}^{-1}\widetilde{\mathbf{x}}$ emitted from the target camera. A 3D point along the ray can be represented as $\mathbf{X}(\lambda) = \lambda\mathbf{L}_\mathbf{x}$ where any scalar depth $\lambda>0$. 

A series of projections of $\mathbf{X}(\lambda)$ onto a source image, $\mathcal{I}_{s_1}$ form the epipolar line $\mathbf{l}_1=\mathbf{F}_1\widetilde{\mathbf{x}}$ where $\mathbf{F}_1$ is the fundamental matrix between the target and source image. This indicates the point on the epipolar line can be parametrized by $\lambda$ as shown in \Cref{fig:figure}, i.e., $\mathbf{x}_1(\lambda) \in \mathbf{l}_1$\footnote{We use an abuse of notation: $\mathbf{x} \in \mathbf{l}$ is equivalent to $\widetilde{\mathbf{x}}^\mathsf{T}\mathbf{l}=0$, i.e., the point $\mathbf{x}$ belongs to the line $\mathbf{l}$}. Likewise a point $\mathbf{x}_i$ in the $i^{\rm th}$ source image $\mathcal{I}_i$ can be described accordingly.

The image based shape-from-silhouette computes a binary map $z_t:\mathds{R}^2\rightarrow \{0,1\}$ that determines the pixel being foreground if one, and zero otherwise. This binary map can be approximated by the logical operations between the binary maps from the $n$ source images ($z_{s_1},\cdots,z_{s_n}$):
\begin{align}
    \hat{z}_t(\mathbf{x}) = \left\{\begin{array}{cl}1 & \mathrm{if~} \exists~ \lambda>0  ~\mathrm{s.t.}~ \bigwedge_i z_{s_i}(\mathbf{x}_i(\lambda))=1 \\
    0 & \mathrm{otherwise}.\end{array}\right. \label{Eq:sfs}
\end{align}
The geometric interpretation of \Cref{Eq:sfs} is that the foreground map for $\mathbf{x}$ is computed by sweeping across all 3D points along the ray $\mathbf{L}_\mathbf{x}$ to see if the ray intersects with the 3D volumetric shape defined by the foreground maps from $n$ views. A key property of this foreground approximation $\hat{z}_t(\mathbf{x})$ from $n$ views that it is always inclusive of the true $z_t(\mathbf{x})$, i.e., $\{\mathbf{x}|z_t(\mathbf{x})=1\} \subseteq \{\mathbf{x}|\hat{z}_t(\mathbf{x})=1\}$.

The implication of the approximation of \Cref{Eq:sfs} is significant for the semi-supervised learning that includes the unlabeled data because it is possible to transfer the recognition belief between views through the underlying 3D shape where the label for the unlabeled data can be approximated. Inspired by this insight, we formulate a rasterized version of \Cref{Eq:sfs} to train a semantic segmentation network.

Let $P_i:\mathds{R}^2\rightarrow [0,1]$ be the foreground probability distribution of the $i^{\rm th}$ source image, i.e., $P_i(\mathbf{x}) = \left.\phi(\mathcal{I}_i;\mathbf{w})\right|_{\mathbf{x}}$. Using the probability distribution, it is possible to compute the probability over the ray $\mathbf{L}_\mathbf{x}$ by projecting the ray onto the $i^{\rm th}$ image: 
\begin{align}
    \xi_i(\lambda;\mathbf{L}_\mathbf{x}) = P_i(\mathbf{x}_i(\lambda))~~\mathrm{where}~~\mathbf{x}_i(\lambda) \in \mathbf{F}_i \widetilde{\mathbf{x}}, \label{Eq:ray_prob}
\end{align}
where $\xi_i(\lambda;\mathbf{L}_\mathbf{x})$ is the probability over the ray parametrized by the depth $\lambda$.

From \Cref{Eq:ray_prob}, the probability of a target image $P_t:\mathds{R}^2\rightarrow [0,1]$ can be approximated by a 3D line max-pooling over joint probability over $n$ views:
\begin{align}
    \hat{P}_t(\mathbf{x}) = \underset{\lambda > 0}{\operatorname{sup}}~\prod_{i=1}^n \xi_i(\lambda;\mathbf{L}_\mathbf{x}), \label{Eq:maxpool}
\end{align}
where $\hat{P}_t(\mathbf{x})$ is the foreground probability transferred from $n$ views. \Cref{Eq:maxpool} is equivalent to \Cref{Eq:sfs} where it takes the probability of a 3D point most likely being in the volumetric shape. 

Note that similar to $\hat{z}_t$, the $\hat{P}_t$ provides the upper bound of the $P_t$, i.e., $\{\mathbf{x}|P_t(\mathbf{x}) > \epsilon \} \subseteq \{\mathbf{x}|\hat{P}_t(\mathbf{x}) > \epsilon \}$. Therefore, direct probability matching using KL divergence~\cite{Kullback:1951} does not apply. Instead, we formulate a new loss $D_S$ using one-way relative cross-entropy as follow:
\begin{align}
    \mathcal{L}_S=D_S(P_t||\hat{P}_t) = \sum_{\mathbf{x}\in X} (1-\hat{P}_t(\mathbf{x})) P_t(\mathbf{x}), \label{Eq:entropy}
\end{align}
where $X$ is the range of the target image coordinate. $D_S(P_t||\hat{P}_t)$ strongly penalizes the set of pixels, $\{\mathbf{x}|\hat{P}_t(\mathbf{x}) < P_t(\mathbf{x}) \}$. \Cref{Fig:model} shows the visualization of the cross-supervision loss.

The main benefits of \Cref{Eq:entropy} are threefold. (1) Multiview segmentation involves two processes: 3D reconstruction of the shape with source views and 2D projection onto the target view. The requirement of 3D reconstruction introduces an additional estimation such as multiview~\cite{kowdle:2012,Djelouah:2013} or single view depth prediction~\cite{kanazawaHMR18,drcTulsiani17,zhou2017unsupervised} where the accuracy of the segmentation is bounded by the reconstruction quality. \Cref{Eq:entropy} integrates the 3D reconstruction and projection through the joint probability over the epipolar lines and supremum operation, which bypass the 3D reconstruction. (2) By minimizing \Cref{Eq:entropy}, it can provide a {\em pseudo-}label for the unlabeled data transferred from the labeled data. As the number of the labeled data increases, the transferred segmentation label approaches to the true label of the unlabeled data~\cite{matusik:2000,Kutulakos:2000}, which allows multiview self-superivsion, i.e., the semantic segmentation of labeled data can supervise the that of the unlabeled data. (3) Not only for the unlabeled data, but also it can correct the geometrically inconsistent segmentation label for the labeled data. This is a significant departure from the existing semantic segmentation that cannot recover erroneous segmentation label, which often arises from per-pixel manual annotations.

\begin{figure}
\begin{center}
\includegraphics[width=1\columnwidth]{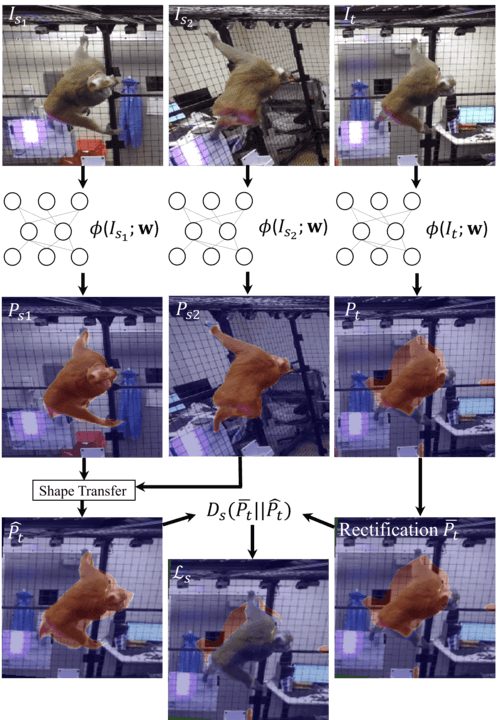}
\end{center}
\vspace{-3mm}
   \caption{We design a triplet network that allows multiview cross-supervision using shape transfer. Stereo rectification is used to simplify the max-pooling operation along the epipolar line, which reduces computational complexity and sampling aliasing.}
\label{Fig:model}
% \vspace{-2mm}
\end{figure}

\subsection{Degenerate Case Analysis} \label{Sec:degenerate}
\Cref{Eq:entropy} has a degenerate case: a trivial solution $P_t=0$ is the global minimizer. Therefore, when the unlabeled data sample is used for the target view, the cross-view supervision via shape transfer based on the labeled data is not possible, i.e., $\hat{P}_U = P_U^+ > P_U$.

\begin{thm} There exists the lower bound of the probability of the unlabeled data sample, $P_U^-$.  \label{thm:thm1}
\end{thm}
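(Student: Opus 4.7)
The plan is to construct $P_U^-$ by role reversal in the shape transfer. The degenerate case arises because when the unlabeled view $U$ serves as the target, the supervision signal vanishes under the trivial zero solution; the symmetric setup in which $U$ serves as one of the source views and a labeled view $t$ serves as the target should restore a nontrivial constraint. First I would fix a labeled target view $t$ with ground truth $y_t$ and one or more labeled sources $\{s_i\}$, and write the shape transfer with $U$ included as a source, $\hat{P}_t(\mathbf{x}) = \sup_{\lambda>0}\, \xi_U(\lambda;\mathbf{L}_\mathbf{x})\prod_i \xi_{s_i}(\lambda;\mathbf{L}_\mathbf{x})$. The upper-bound relation $P_t(\mathbf{x}) \leq \hat{P}_t(\mathbf{x})$ combined with the loss $D_S(P_t\|\hat{P}_t)$ forces $\hat{P}_t(\mathbf{x})$ to approach $y_t(\mathbf{x})$ at every foreground pixel.

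Next I would exploit this pixelwise constraint to extract a pointwise lower bound on $P_U$. At a foreground pixel $\mathbf{x}$ of the labeled target (so $y_t(\mathbf{x})=1$), the supremum in the shape transfer must attain a value arbitrarily close to one; hence there exists a depth $\lambda^\star(\mathbf{x})$ for which $\xi_U(\lambda^\star;\mathbf{L}_\mathbf{x})\prod_i \xi_{s_i}(\lambda^\star;\mathbf{L}_\mathbf{x}) \to 1$. Because each factor lies in $[0,1]$, every factor must individually approach one, which means the pixel $\mathbf{y} = \mathbf{x}_U(\lambda^\star(\mathbf{x}))$ in the unlabeled view satisfies $P_U(\mathbf{y}) \geq y_t(\mathbf{x}) \prod_i \xi_{s_i}(\lambda^\star;\mathbf{L}_\mathbf{x})$. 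Taking the supremum of the right-hand side over all labeled targets $t$, foreground pixels $\mathbf{x}$, and depths $\lambda^\star$ that are consistent with the labeled source segmentations and that project to $\mathbf{y}$ in the unlabeled view defines $P_U^-(\mathbf{y})$. By construction $P_U^- \leq P_U$, and $P_U^- > 0$ wherever an epipolar correspondence to a labeled foreground pixel exists.

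The main obstacle will be handling the argmax in the max-pool carefully: the depth $\lambda^\star$ achieving the supremum need not be unique, so the back-projection onto $U$ may be multi-valued, and the bound must be defined as a supremum over all admissible depths rather than a single value. A related technical point is ensuring that the factor-wise decomposition from $\hat{P}_t \to 1$ into $\xi_U\to 1$ is valid in the limit; this is immediate in the binary idealization analogous to \Cref{Eq:sfs} but requires an $\epsilon$-argument in the probabilistic relaxation. A minor caveat is that $P_U^-$ is only informative at pixels covered by labeled epipolar correspondences, so the bound becomes globally tight only as the number of labeled views grows—mirroring the asymptotic argument used for the upper bound $P_U^+$ and closing the gap in the same regime promised by the shape-from-silhouette analysis.
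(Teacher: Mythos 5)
Your proposal is correct and takes essentially the same route as the paper's proof: both reverse the roles so that the unlabeled view acts as a source whose probability $\xi_U$ must explain a labeled target through the sup-product shape transfer, and both obtain the lower bound from the monotonicity of that transfer in $\xi_U$ (the paper phrases this as the existence of a $\xi_U^- < \xi_U$ that can no longer explain $P_{L_1}$). Your version merely spells out the pointwise extraction and the argmax caveats that the paper's terser argument leaves implicit.
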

\begin{proof}
Consider an inverse shape transfer for the unlabeled data in \Cref{Eq:maxpool}, $\phi_U(\lambda;\mathbf{L}_\mathbf{x})$, to explain the first labeled data sample $P_{L_1}$:
\begin{align}
    P_{L_1}(\mathbf{x}) = \underset{\lambda > 0}{\operatorname{sup}}~\xi_U(\lambda;\mathbf{L}_\mathbf{x})\prod_{i=2}^n \xi_{L_i}(\lambda;\mathbf{L}_\mathbf{x}), \label{Eq:proof1}
\end{align}
where $P_{L_i}$ is the probability of the $i^{\rm th}$ labeled data. Since the supremum in Equation~(\ref{Eq:proof1}) is a non-decreasing function with respect to $\xi_U(\lambda;\mathbf{L}_\mathbf{x})$, there exists $\xi_U^-(\lambda;\mathbf{L}_\mathbf{x}) < \xi_U(\lambda;\mathbf{L}_\mathbf{x})$ that cannot explain $P_{L_1}(\mathbf{x})$:
\begin{align}
    P_{L_1}(\mathbf{x}) > \underset{\lambda > 0}{\operatorname{sup}}~\xi_U^-(\lambda;\mathbf{L}_\mathbf{x})\prod_{i=2}^n \xi_{L_i}(\lambda;\mathbf{L}_\mathbf{x}). 
\end{align}
Therefore, there exists the lower bound of $P_U$.
\end{proof}
From Theorem~\ref{thm:thm1}, \Cref{Eq:entropy} can provide both upper and lower bounds of the unlabeled data if used as the target and source views, i.e., $P_U^-<P_U \leq P_U^+$, and $P_U^-$ asymptotically approaches to $P_U^+$ as the number of labeled views increases~\cite{matusik:2000,Kutulakos:2000}, i.e., $\lim_{|\mathcal{D}_L|\rightarrow \infty} (P_U^+ - P_U^-) = 0$.  \Cref{Fig:overlap} shows the upper bound becomes tighter as the number of labeled data increases

\begin{figure}
  \centering  
    \includegraphics[width=0.47\textwidth]{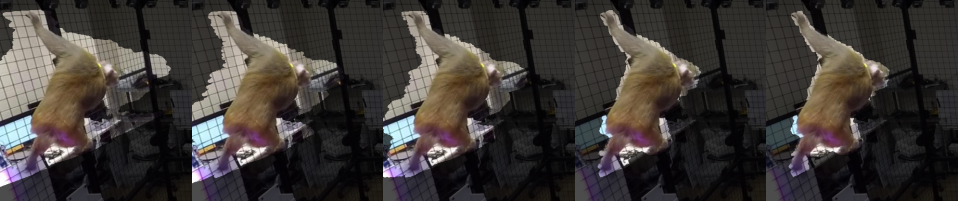}
  \caption{The upper bound of the probability for the unlabeled data becomes tighter as the number of the labeled data increases.} 
  \label{Fig:overlap}
\end{figure}

We leverage this asymptotic convergence of the shape transfer to self-supervise the unlabeled data, i.e., the unlabeled data are fed into both the target and source views, which allows the gradient induced by the error in the loss function of \Cref{Eq:entropy} can be backpropagated through the neural network to reduce the gap between $P_U^+$ and $P_U^-$. 

\subsection{Cross-view Supervision via Shape Transfer}

In practice, embedding \Cref{Eq:entropy} into an end-to-end neural network is not trivial because (a) a new max-pooling operation over oblique epipolar lines needs to be defined; (b) sampling interval for max-pooling along the line is arbitrary, i.e., uniform sampling does not encode geometric meaning such as depth; and (c) sampling interval across different epipolar line parameters is also arbitrary, which may introduce sampling aliasing. This leads to irregular segmentation probability distribution transfer based on the fundamental matrix. 

We introduce a new operation inspired by stereo rectification, which warps the segmentation probability distribution such that the epipole is transformed to a point at infinity, i.e., the epipolar lines become parallel (horizontal). This rectification allows converting the oblique line max-pooling into regular row-wise max-pooling. 

\Cref{Eq:ray_prob} can be re-written by rectifying the probability distribution of the source view with respect to the target view:
\begin{align}
    \overline{\xi}_1(u;\mathbf{L}_\mathbf{x}) = \overline{P}_1\left(\left[\begin{array}{c}u\\v_1\end{array}\right]\right)~\mathrm{where}~\mathbf{K}\mathbf{R}_1\mathbf{K}^{-1}\widetilde{\mathbf{x}} \propto \left[\begin{array}{c}x\\v_1\\1\end{array}\right], \nonumber
\end{align}
where $\mathbf{K}\mathbf{R}_1\mathbf{K}^{-1}\widetilde{\mathbf{x}}$ is the rectified coordinate of the target view, $\mathbf{R}_1\in SO(3)$ is the relative rotation for the rectification. See Appendix for more details. Note that $\xi$ is no longer a function of the depth scale $\lambda$ but the $x$ coordinate (disparity), which eliminates irregular sampling across pixels with the $y$ coordinate $v_1$. 

The key advantage of this rectification is that the $x$ coordinate of the $i^{\rm th}$ view can still be parametrized by the same $u$, i.e., the coordinate is linearly transformed to from the first view to the rest views:
\begin{align}
\overline{\xi}_i(a_i u+b_i;\mathbf{L}_\mathbf{x}) = \overline{P}_i\left(\left[\begin{array}{c}a_i u+b_i\\v_i\end{array}\right]\right) \nonumber
\end{align}
where $a_i$ and $b_i$ are the linear re-scaling factor and bias between the first and $i^{\rm th}$ views accounting for camera intrinsics and cropping parameters. $\overline{\phi}_i$ is computed by the rectified probability of the $i^{\rm th}$ view $\overline{P}_i$ with respect to the target view. See Appendix for more details. This simplifies the supremum operation over the 3D ray in \Cref{Eq:maxpool} to the max operation over the $x$ coordinates:
\begin{align}
    \hat{P}_t(\mathbf{x}) = \underset{u\in [0,W]}{\operatorname{max}}~\overline{\xi}_1(u;\mathbf{L}_\mathbf{x})\prod_{i=2}^n \overline{\xi}_i(a_i u+b_i;\mathbf{L}_\mathbf{x}). \label{Eq:maxpool1}
\end{align}

Our semi-supervised learning framework has Siamese network configuration which consists of four same segmentation models with shared weights. The first network is fully supervised which only learns the labeled data from their annotations. The triplet networks take three different images in the same frame, and they can be either labeled or unlabeled images. \Cref{Fig:model} illustrates the triplet network that minimizes the cross-view supervision loss by applying stereo rectification and shape transfer where the first two images serve as source views and third image is target view. The foreground probability distributions of the first source image and target view are rectified to reduce the sampling aliasing and computational complexity for computing the fundamental matrices. The foreground probability of each point in the target view transferred from two source views is calculated using \Cref{Eq:maxpool1}. The cross-view supervision loss computed by \Cref{Eq:entropy} is propagated back to four networks. In the actual implementation, each image is used as source view to supervise other two images and is supervised by one image pair simultaneously. The degenerate case for unlabeled data discussed in \Cref{Sec:degenerate} can be avoided once it is input to this triplet network. \Cref{Fig:vs} shows that one target view can be supervised by multiple different image pairs during the actual training.
% ．The tightness of the upper bound is determined by their relative camera poses. 

\begin{figure}
\begin{center}
\includegraphics[width=1\columnwidth]{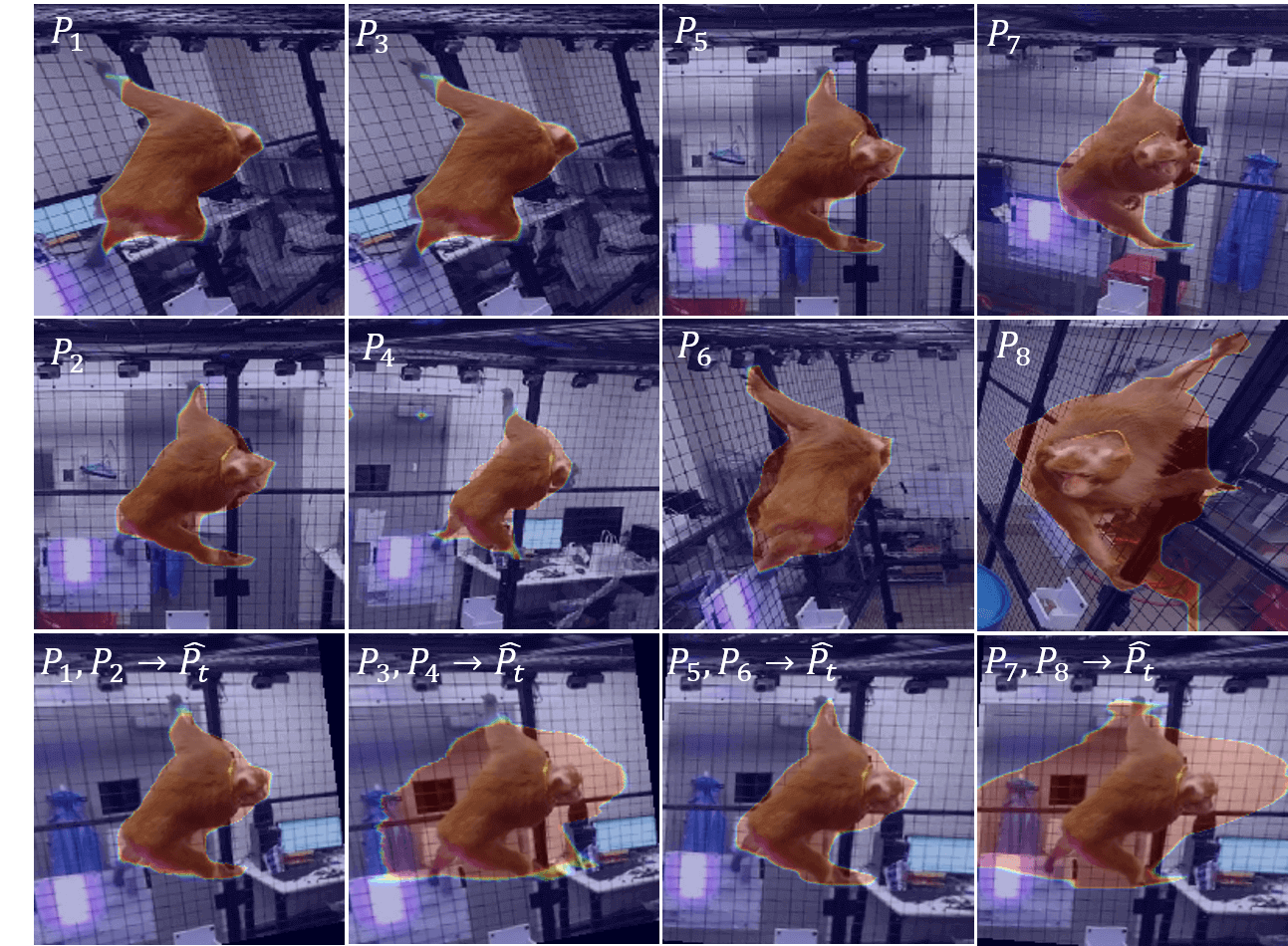}
\end{center}
\vspace{-3mm}
   \caption{Different image pairs (top two rows) can be used to supervise one target view (bottom). We use such multiple triplets to supervise each other's view.}
\label{Fig:vs}
% \vspace{-2mm}
\end{figure}

\begin{figure*}[t]
\centering 
\includegraphics[width=1.7\columnwidth]{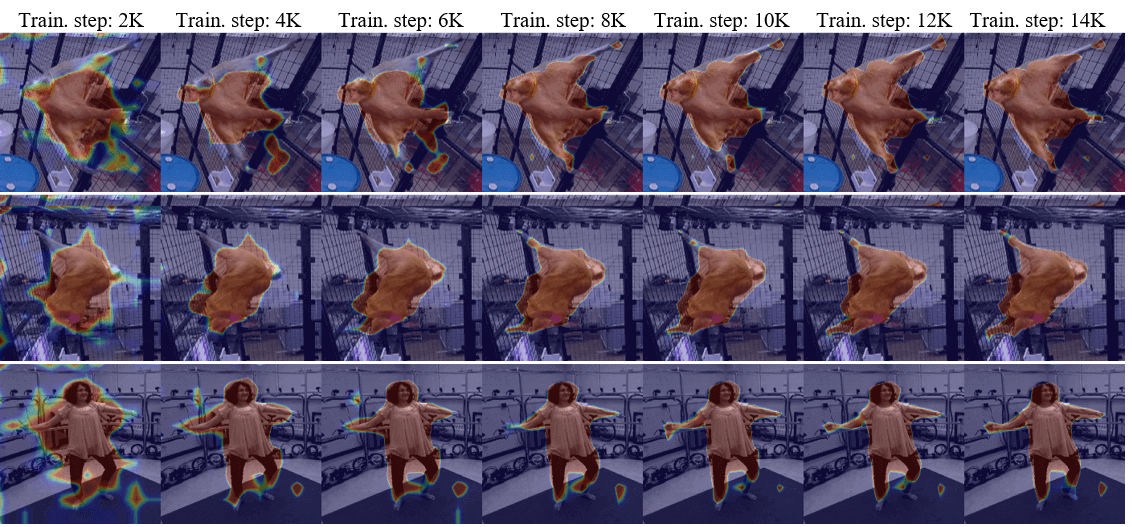}
\vspace{-2mm}
   \caption{We visualize the prediction result of our semi-supervised framework on unlabeled data every 2000 training iterations.}
\label{Fig:progress}
\end{figure*}

\subsection{Bootstrapping Prior} \label{sec:prior}
\Cref{Eq:sfs} is often highly effective to generate a prior for 3D shape given the binary label. Inspired by multiview bootstrapping~\cite{simon:2017,jafarian:2018}, we approximate the 3D shape using the pre-trained neural network $\phi$. Note that unlike keypoint detection, RANSAC~\cite{Fischler:1981} outlier rejection approaches cannot be applied because pixel correspondences are not available for semantic segmentation. We binarize the probability of the foreground segment to compute the $i^{\rm th}$ source binary map $z_{s_i}(\mathbf{x}) = 1$ if $P_i(\mathbf{x})>0.5$, and zero otherwise. Using all source binary maps, a pseudo-binary map for the $j^{\rm th}$ unlabeled data $\hat{z}_j$ can be computed and used for the bootstrapping prior, i.e.,
\begin{align}
    \mathcal{L}_P = \sum_{j \in \mathcal{D}_U} \sum_{\mathbf{x}\in X} (1-\hat{z}_j(\mathbf{x})) P_j(\mathbf{x}) \label{Eq:prior}
\end{align}
Similar to \Cref{Eq:entropy}, $\hat{z}_j$ provides the superset of the ground truth, which requires the one-way relative cross entropy as a prior loss.

\begin{figure*}
\centering
\includegraphics[width=1\textwidth]{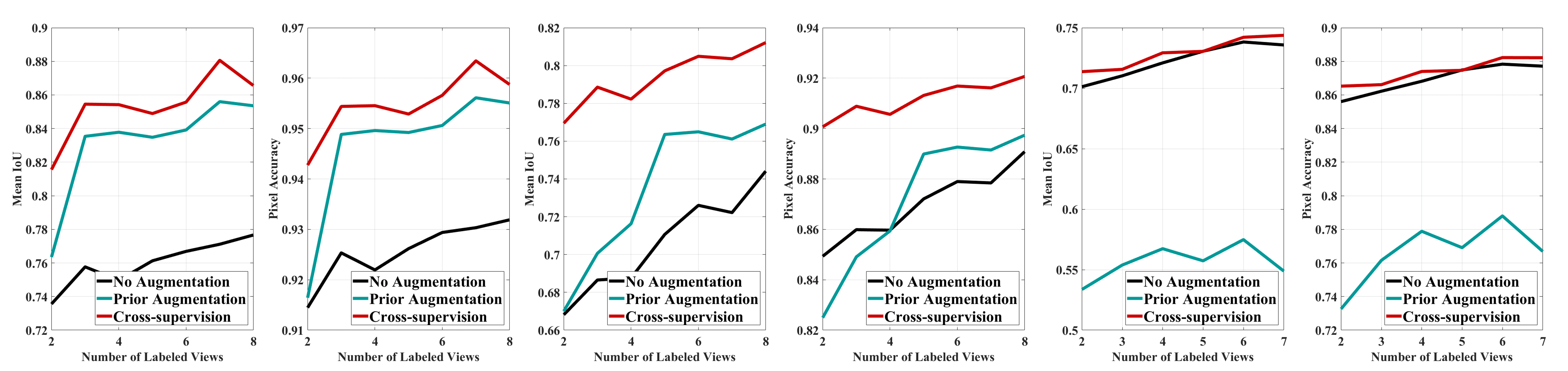}
\vspace{-4mm}
   \caption{The mean IoU and pixel accuracy of semi-supervised model tested on the unlabeled monkey, dance, and social event dataset.}
\label{Fig:acc}
\end{figure*}

\begin{table*}[h!]
\centering
\footnotesize
\resizebox{\textwidth}{!}{%
  \begin{tabular}{l|ccc|ccc|ccc|ccc|ccc|ccc}
    \toprule
    \multirow{3}{*}{Model} &
      \multicolumn{3}{c|}{Monkey (IoU)} &
      \multicolumn{3}{c|}{Dance (IoU)} &
      \multicolumn{3}{c|}{Social (IoU)} &
      \multicolumn{3}{c|}{Monkey (Pixel Acc.)} &
      \multicolumn{3}{c|}{Dance (Pixel Acc.)} &
      \multicolumn{3}{c|}{Social (Pixel Acc.)} \\
      & {2} & {5} & {8} & {2} & {5} & {8} & {2} & {5} & {7} & {2} & {5} & {8} & {2} & {5} & {8} & {2} & {5} & {7} \\
      \midrule
    No aug & 0.735 & 0.761 & 0.776 & 0.668 & 0.710 & 0.744 & 0.701 & 0.730 & 0.735 & 0.914 & 0.926 & 0.931 & 0.849 & 0.872 & 0.890 & 0.856 & 0.874 & 0.877 \\
    Prior & 0.763 & 0.834 & 0.853 & 0.670 & 0.763 & 0.769 & 0.533 & 0.557 & 0.548 & 0.916 & 0.949 & 0.955 & 0.824 & 0.889 & 0.897 & 0.732 & 0.769 & 0.766 \\
    Cross & 0.815 & 0.848 & 0.865 & 0.769 & 0.797 & 0.812 & 0.713 & 0.730 & 0.743 & 0.942 & 0.952 & 0.958 & 0.900 & 0.913 & 0.920 & 0.865 & 0.874 & 0.882\\
    \bottomrule
  \end{tabular}%
  }
  \vspace{-2mm}
  \caption{Mean IoU and pixel accuracy result on different datasets with different number of labeled views}
\label{table:iou}
\end{table*}

\section{Result}
We validate our semi-supervised semantic segmentation framework using real-world data on human and non-human species including a subject of interest in social videos with three different multi-camera systems. Monkey, dancer and social event subjects are captured by 69, 35, and 18 cameras, respectively. One monkey was crawling against the cage in the video, and the array of cameras were placed in the cage ceiling. An Indian dancer was performing solo dance captured by 69 cameras in three layers with different heights. In the social event videos, a group of dancers were performing Hip-hop dance, and they were surround by the audiences holding hand-held cameras.

% These videos are captured by 18 walking people holding a camera in their hands. 

To evaluate the flexibility, we build a model per subject without a pre-trained model. The DeepLab v3\cite{deeplab} network is used to build the fully supervised and semi-supervised triplet network. Our segmentation network takes an input image (200$\times$200), and outputs two distribution heatmaps for foreground and background (200$\times$200). In the training, we use the batch size 5 for fully supervised network and 3 for triplet network, learning rate $10^{-5}$, batch norm epsilon $10^{-5}$, and batch norm decay 0.9997. We use an ADAM optimizer of TensorFlow with nVidia GTX 1080. 

We randomly sample 16, 16, and 14 cameras from monkey, dance and social event datasets and manually annotate the 20 frames in half sampled cameras. We conduct multiple experiments using different number of labeled views for bootstrapping prior and cross-supervision from two to half number of the sampled cameras. We compare our approach with two different baseline algorithms. For all algorithms, we evaluate the performance on the unlabeled data. (1) \textbf{No augmentation}: we use the manually annotated images to train the network. (2) \textbf{Prior augmentation}: the prior of unlabeled data is generated the way discussed in \Cref{sec:prior}. We train the network using both annotations and prior.

\subsection{Quantitative  Result}
We evaluate our approach based on two metrics: mean IoU and mean pixel accuracy. \Cref{Fig:acc} shows mean IoU and mean pixel accuracy performance on monkey, dance, and social event subjects using different number of labeled view, and no pre-trained model is used. \Cref{table:iou} reports the numerical results from \Cref{Fig:acc}. All the figures display the same trend that the accuracy increases as the number of labeled views increases. 

Our cross-supervision (red) model exhibits accurate segmentation for all subjects, which outperforms 2 baselines. Cross-supervision model and the model trained with prior augmentation both perform better than the model trained with annotation only on the monkey dataset while the model trained with prior augmentation and model trained with annotation have similar performance on the dance dataset. We observe that the most labeled views selected for the monkey dataset can generate a tight upper bound for the monkey in the target views. The labeled cameras sampled for dance video have very close distances and similar angles; therefore, multiple upper bounds constructed for the dancer are very loose. However, as we observe in \Cref{Fig:acc}, if the weight on boostrapping prior is set to be lower, the cross-supervision is able to correct the prior. 

% It is unusual that the model trained with annotation outperforms the model trained with prior augmentation for the social event data. After careful check on this dataset, we found that since the cameras were moving during the capture, the extrinsic parameters of the cameras are not synchronized accurately enough, especially for the dancers who are relatively small in the images. 

\subsection{Qualitative  Result}
The qualitative comparison can be found in \Cref{Fig:compare}. This figure shows the prediction results on the unlabeled data using three models. Our cross-supervision method is able to correct the segmentation errors in the two baselines by leveraging multiview images jointly. This becomes more evident on the boundaries or protruding body parts, e.g., monkey's paws and tails, human's legs and hands. \Cref{Fig:progress} shows how the semi-supervised network progresses on the unlabeled data during the training. We can see that as the training iteration increases, the prediction becomes tighter around the subject while the protruding body parts can be predicted more accurately. 

\section{Discussion}
We present a new semi-supervised framework to train a semantic segmentation network by leveraging multi-view image streams. The key innovation is a method of shape belief transfer---using segmentation belief in one image to predict that of the other image through epipolar geometry analogous to shape-from-silhouette. The shape belief transfer provides the upper and lower bounds of the segmentation for the unlabeled data. We introduce a triplet network which is used to embed computing of transferred shape. We also use multi-view image streams to bootstrap the unlabeled data for training data augmentation. 

% We demonstrate that our framework outperforms baseline approaches in terms of mean IoU and mean pixel accuracy.  

\begin{figure*}
\centering 
\includegraphics[width=2\columnwidth]{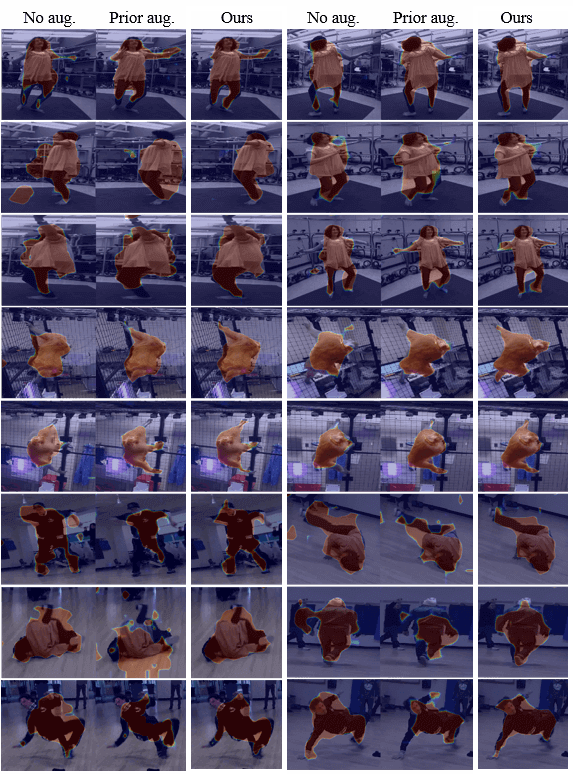}
   \caption{We qualitatively compare our semi-supervised framework with 2 baseline algorithms on dance, monkeys, and social event.}
\label{Fig:compare}
\end{figure*}

\clearpage
{\small
\bibliographystyle{ieee}
\bibliography{egpaper_for_review}
}

\newpage
\appendix 

\section{Cropped Image Correction and Stereo Rectification} \label{Sec:rect}

\begin{figure}
\centering 
\includegraphics[width=1\columnwidth]{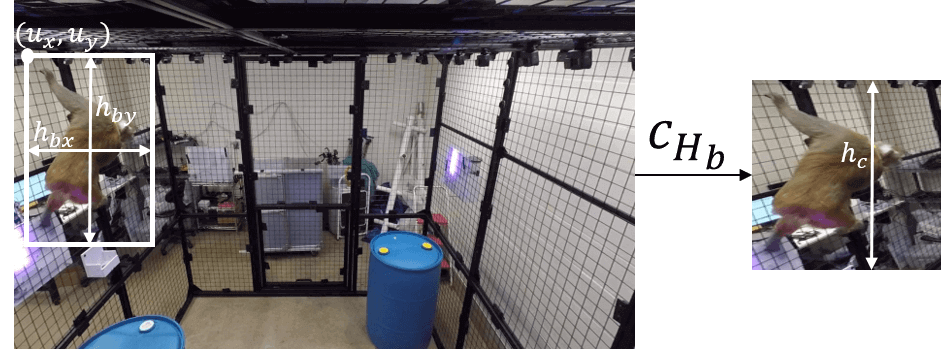}
\caption{A cropped image is an input to the network where the output is the segmentation distribution with the same size. To rectify the segmentation distribution (heatmap), a series of image transformations need to be applied}
\label{Fig:exp}
\end{figure}

We warp the segmentation distribution using stereo rectification. This requires a composite of transformations because the rectification is defined in the full original image. The transformation can be written as:
\begin{align}
    ^{\overline{h}}\mathbf{H}_h = \left(^{\overline{c}}\mathbf{H}_{\overline{b}}\right)\mathbf{H}_r \left(^c\mathbf{H}_b\right)^{-1}.
\end{align}
The sequence of transformations takes a segmentation distribution of the network output $P$ to the rectified segmentation distribution $\overline{P}$: cropped and resized image$\rightarrow$original image$\rightarrow$rectified image$\rightarrow$rectified cropped and resize image. 

Given an image $\mathcal{I}$, we crop the image based on the bounding box as shown in \Cref{Fig:exp}: the left-top corner is $(u_x, u_y)$ and the height is $h_b$. The transformation from the image to the bounding box is:
\begin{align}
    ^c\mathbf{H}_b = \begin{bmatrix}s_x &  0 & -s_x u_x\\ 0 & s_y & -s_y u_y\\0 & 0 & 1\end{bmatrix}
\end{align}
where $s_x = h_{c}/h_{bx}$ and $s_y = h_{c}/h_{by}$. It corrects the aspect ratio factor. $h_{c}=200$ is the width and height of the cropped image, which is the input to the network. The network output have the same resolution as the input. The rectified transformations $\left(^{\overline{c}}\mathbf{H}_{\overline{b}}\right)$ can be defined in a similar way. 

Given the cropping factors, we derive $v_i$ and the re-scaling factor of $a_i$ and $b_i$ in the following Equation in Section 3.3:
\begin{align}
\overline{\xi}_i(a_i u+b_i;\mathbf{L}_\mathbf{x}) = \overline{P}_i\left(\left[\begin{array}{c}a_i u+b_i\\v_i\end{array}\right]\right), \nonumber
\end{align}
where $v_i$ is the y coordinate of the rectified image that corresponds to $(u_1,v_1)$. $v_i$ can be computed by transforming $(u_1,v_1)$ to the $i^{\rm th}$ rectified coordinate: 
\begin{align}
 \left[\begin{array}{c}u_i\\v_i\\1\end{array}\right] = 
 \left(^{\overline{c_{i}}}\mathbf{H}_{\overline{b_{i}}}\right) ^{r_i}\mathbf{H}_o \left(^{r_1}\mathbf{H}_o\right)^{-1} \left(^{c_{1}}\mathbf{H}_{b_{1}}\right)^{-1} \left[\begin{array}{c}u_1\\v_1\\1\end{array}\right], \nonumber
\end{align}

where ${^{r_i}\mathbf{H}_o}$ is the homography that rectifies the original target view with respect to the $i^{\rm th}$ camera. The point in the first cropped and rectified image $(u_1, v_1)$ is transformed to $(u_i, v_i)$.

For $a_i$ and $b_i$, 
\begin{align}
 a_i &= \frac{W_1 \cos\theta}{W_i} \\
 b_i &= - \frac{W_1 u_o^1 \cos\theta}{W_i} + u_o^i,
\end{align}
where $\theta = \cos^{-1} \frac{trace(\mathbf{R}_o^{1\rightarrow i})-1}{2}$. $W_i$ is the distance (baseline) between the $i^{\rm th}$ camera and target camera. $u_o^i$ is the point in target view rectified with respect to $i^{\rm th}$ view. $\mathbf{R}_o^{1\rightarrow i}$ is the difference between the rotations of target view rectified with respect to the first view and $i^{\rm th}$ view.

\section{Qualitative  Result}\label{Sec:quali}

We validate our semi-supervised semantic segmentation framework using three real-world datasets: monkey, dancer and the subjects in social videos. In the social event videos, a group of dancers were performing Hip-hop dance, and they were surround by the audiences holding hand-held cameras. An Indian dancer was performing solo dance captured by 69 cameras in three layers with different heights. One monkey was crawling against the cage in the video, and the array of cameras were placed in the cage ceiling. 

\Crefrange{Fig:c1}{Fig:c8} shows the prediction results on the unlabeled data using three models. \Crefrange{Fig:compare1}{Fig:compare8} shows how the semi-supervised network progresses on the unlabeled data during the training. \Cref{Fig:b2} shows some failure cases of our segmentation framework. 

One possible reason of the failures on social data is that since the people who hand-held the cameras were walking around when they captured the videos, the synchronization is not accurate enough; therefore the camera rotation and location data is very noisy, which causes the shape belief transfer incorrect. Other reasons for failures can be that there are no enough source image pairs which are able to construct tight upper bound for subjects. Or the weight on prior is too small to affect the predictions. 

\begin{figure*}
\centering 
\includegraphics[width=2\columnwidth]{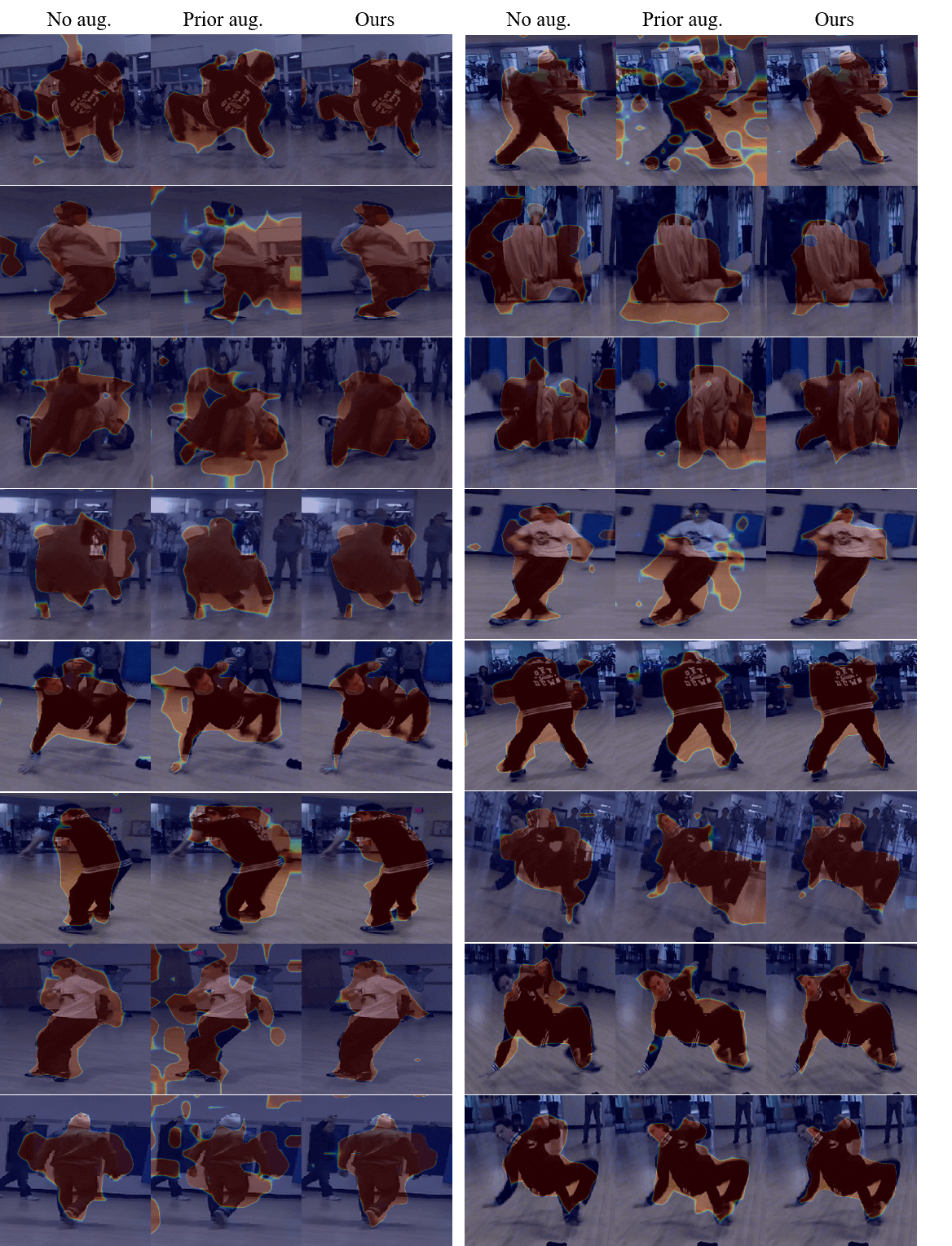}
\caption{Qualitative result of multiview segmentation.}
\label{Fig:c1}
\end{figure*}

\begin{figure*}
\centering 
\includegraphics[width=2\columnwidth]{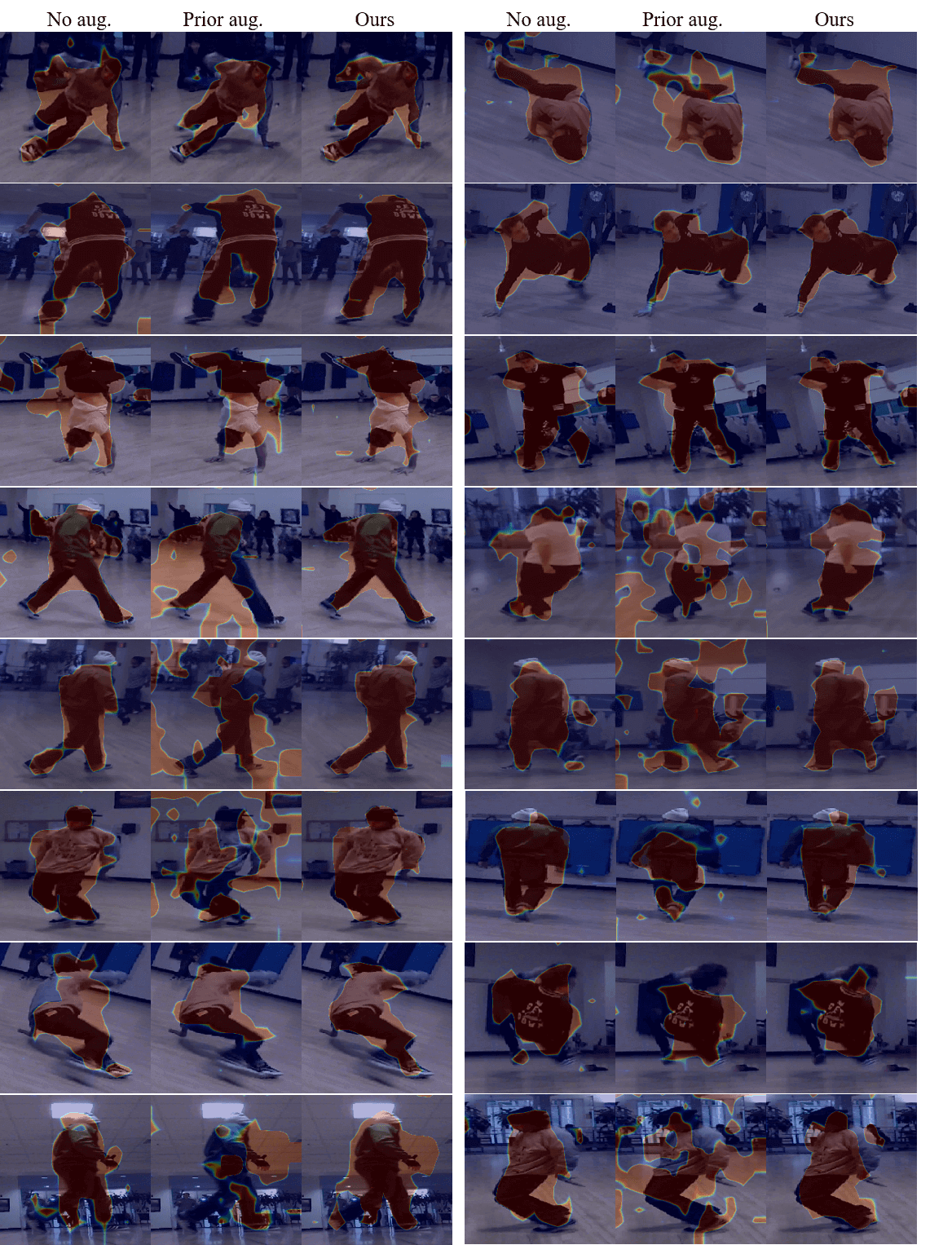}
\caption{Qualitative result of multiview segmentation.}
\label{Fig:c2}
\end{figure*}

\begin{figure*}
\centering 
\includegraphics[width=2\columnwidth]{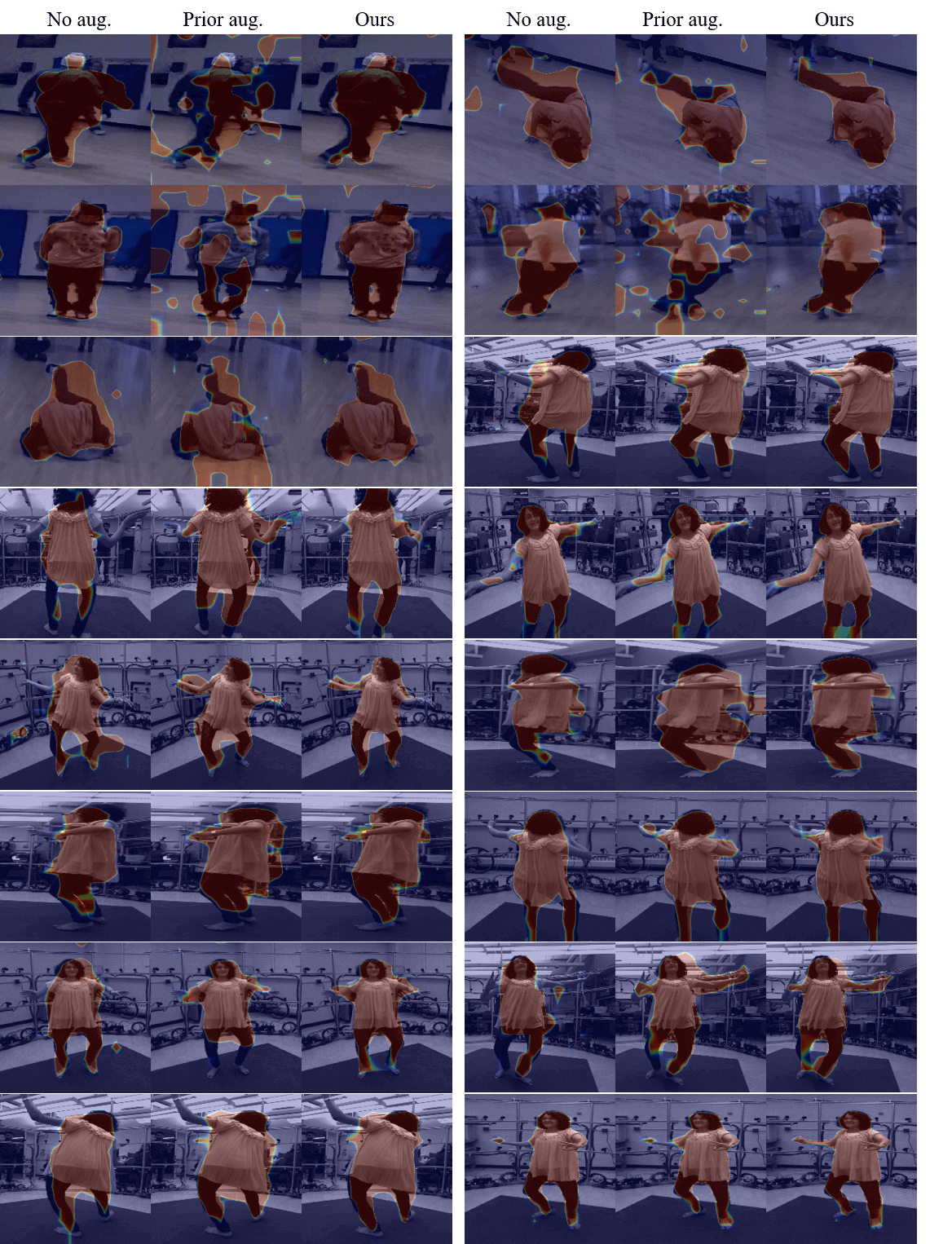}
\caption{Qualitative result of multiview segmentation.}
\label{Fig:c3}
\end{figure*}

\begin{figure*}
\centering 
\includegraphics[width=2\columnwidth]{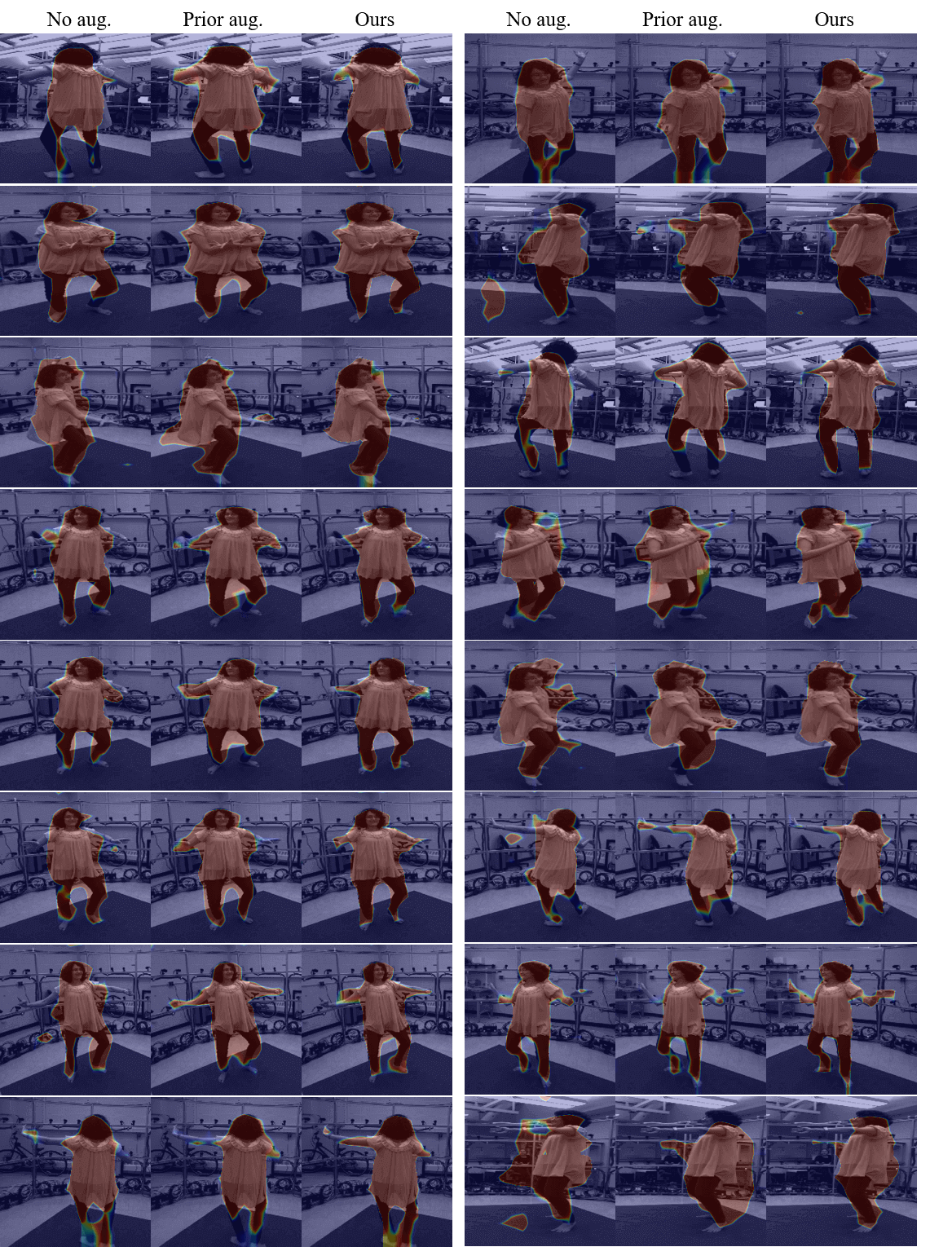}
\caption{Qualitative result of multiview segmentation.}
\label{Fig:c4}
\end{figure*}

\begin{figure*}
\centering 
\includegraphics[width=2\columnwidth]{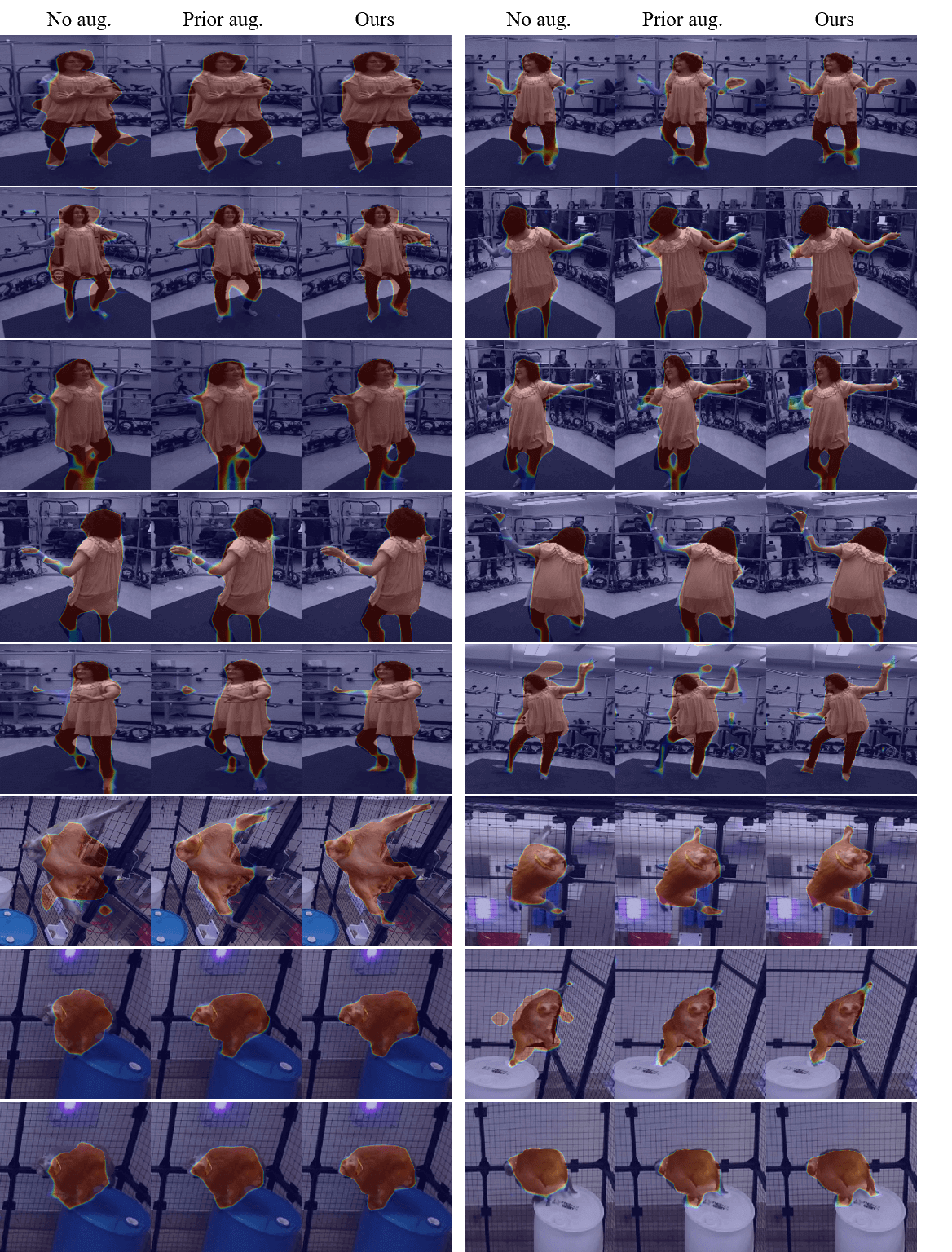}
\caption{Qualitative result of multiview segmentation.}
\label{Fig:c5}
\end{figure*}

\begin{figure*}
\centering 
\includegraphics[width=2\columnwidth]{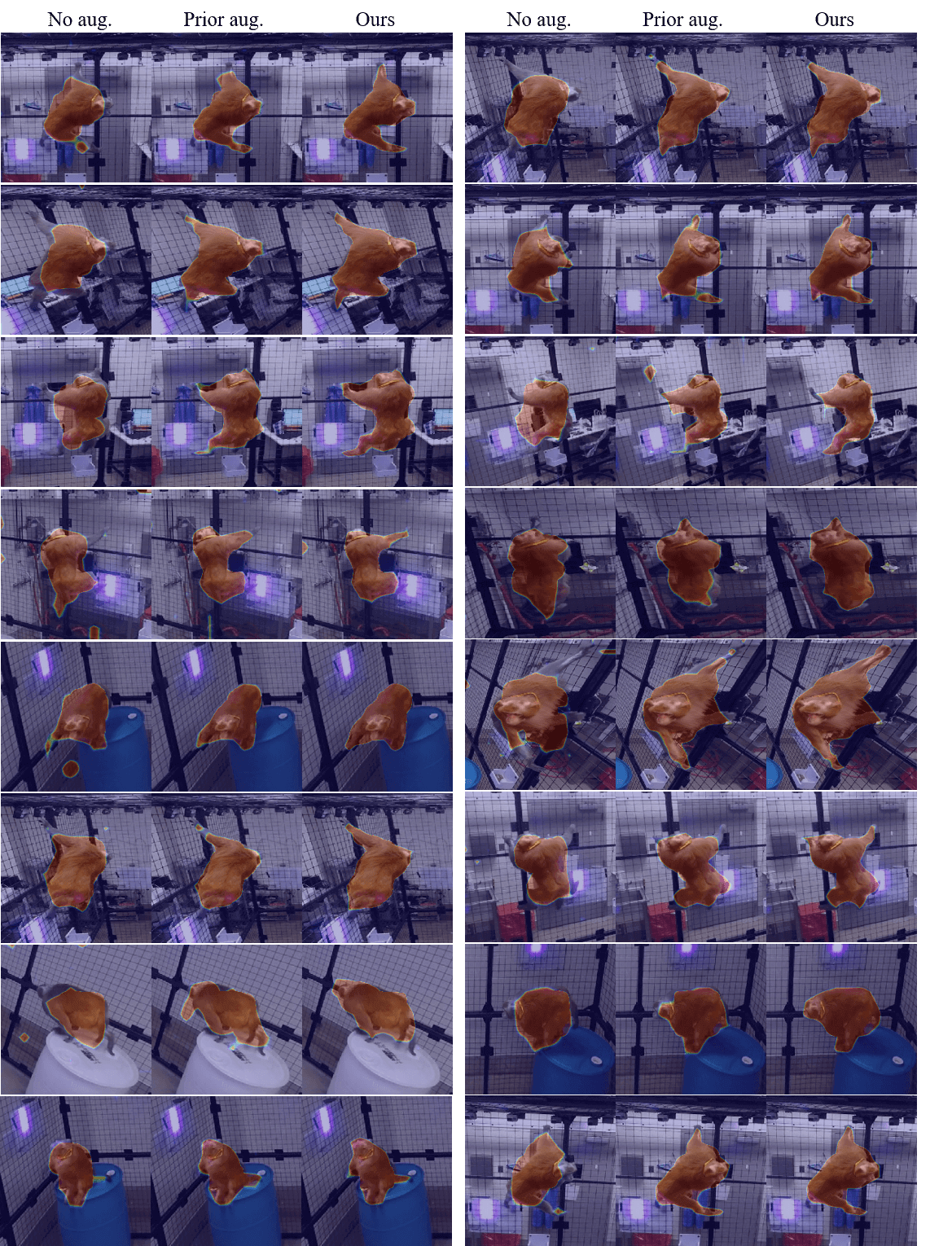}
\caption{Qualitative result of multiview segmentation.}
\label{Fig:c6}
\end{figure*}

\begin{figure*}
\centering 
\includegraphics[width=2\columnwidth]{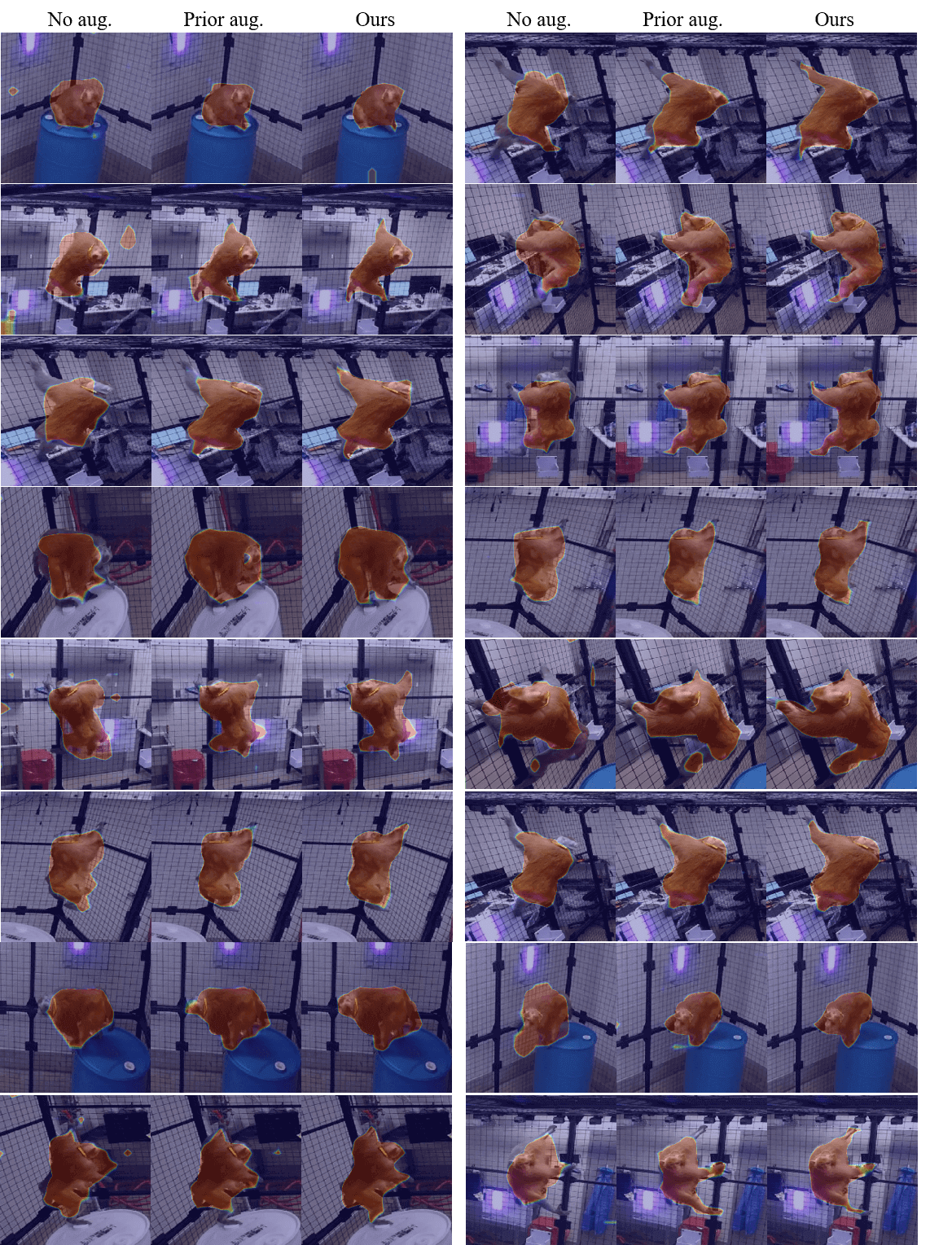}
\caption{Qualitative result of multiview segmentation.}
\label{Fig:c7}
\end{figure*}

\begin{figure*}
\centering 
\includegraphics[width=2\columnwidth]{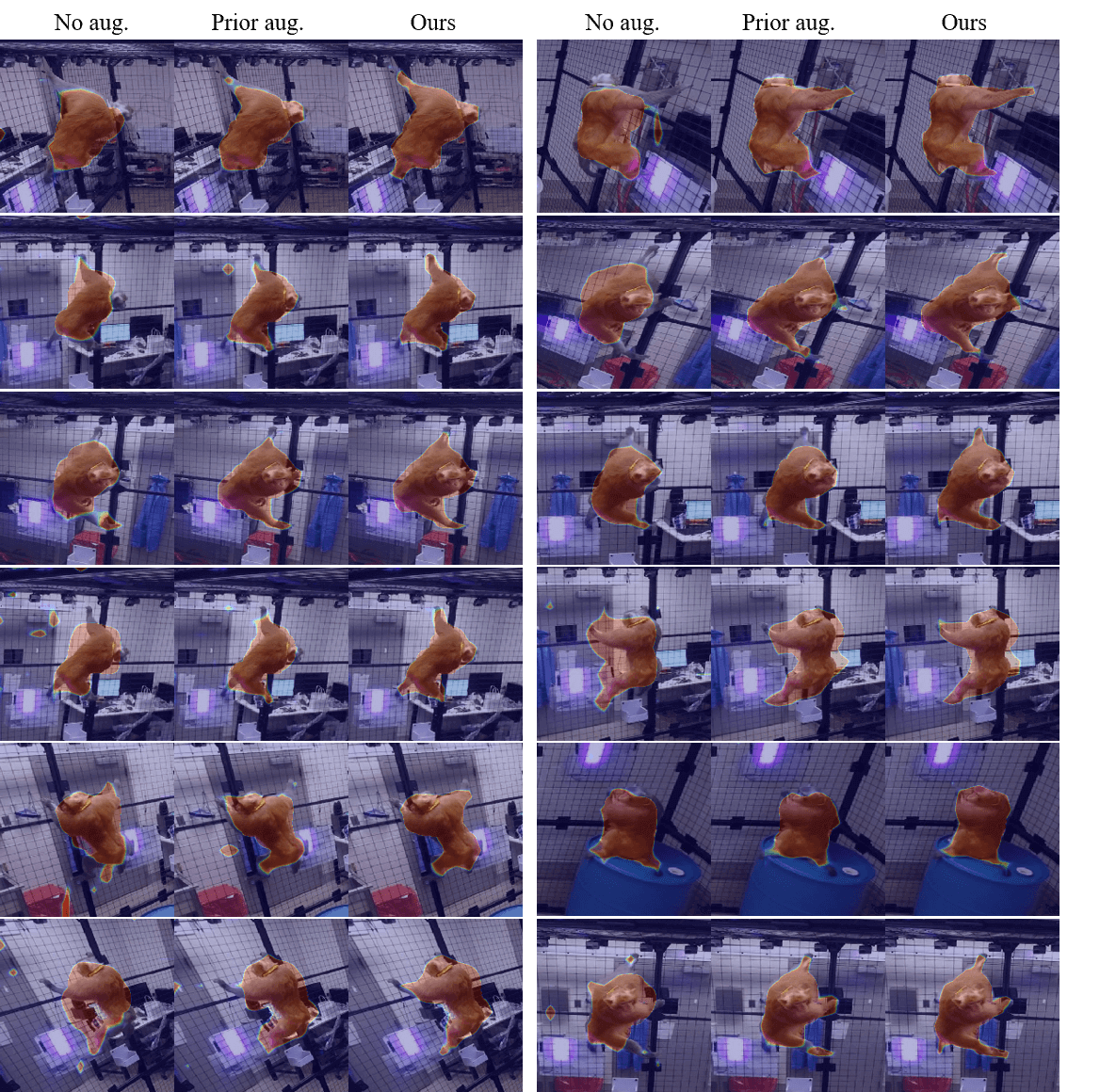}
\caption{Qualitative result of multiview segmentation.}
\label{Fig:c8}
\end{figure*}

\begin{figure*}
\centering 
\includegraphics[width=2\columnwidth]{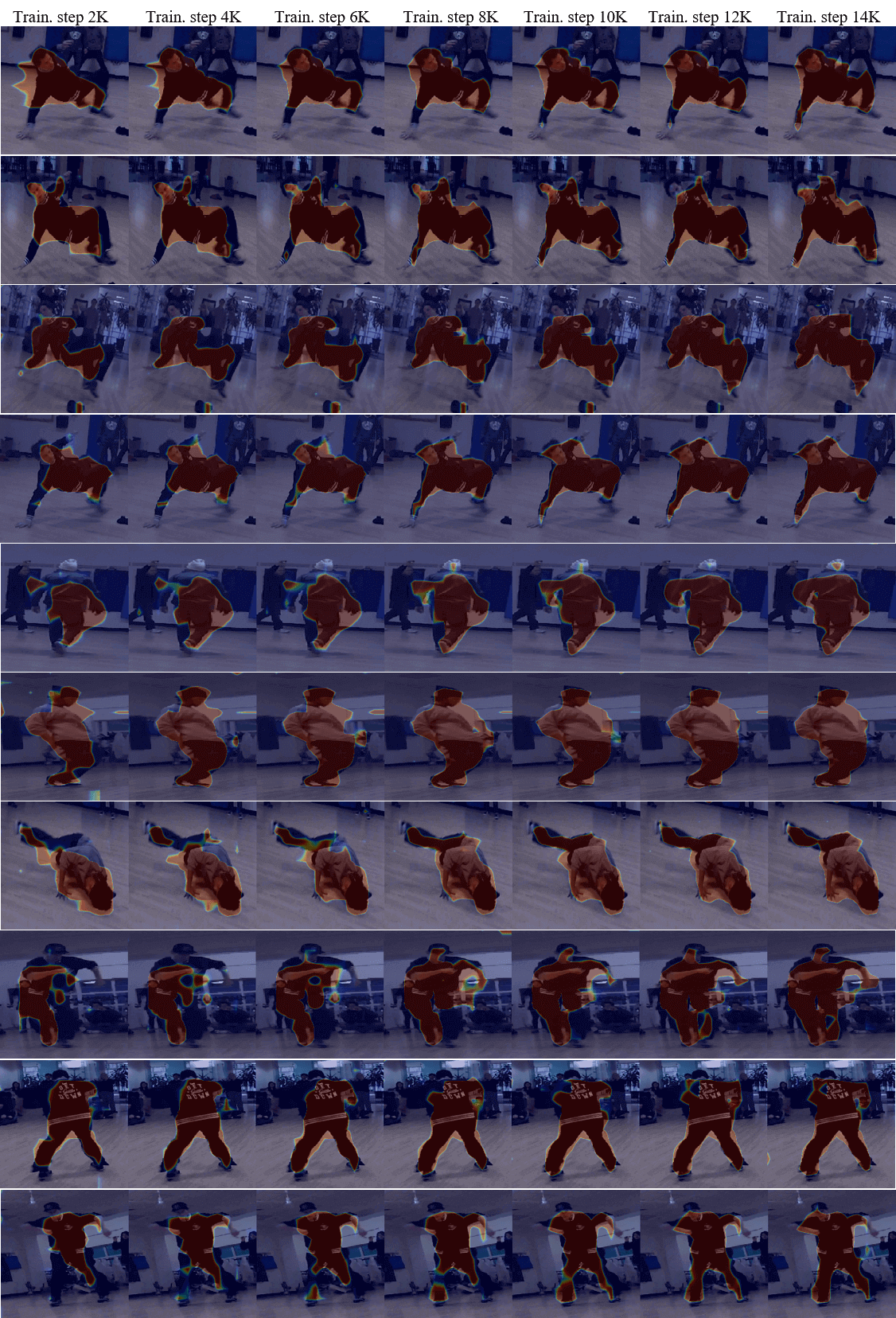}
\caption{Qualitative result of multiview segmentation.}
\label{Fig:compare1}
\end{figure*}

\begin{figure*}
\centering 
\includegraphics[width=2\columnwidth]{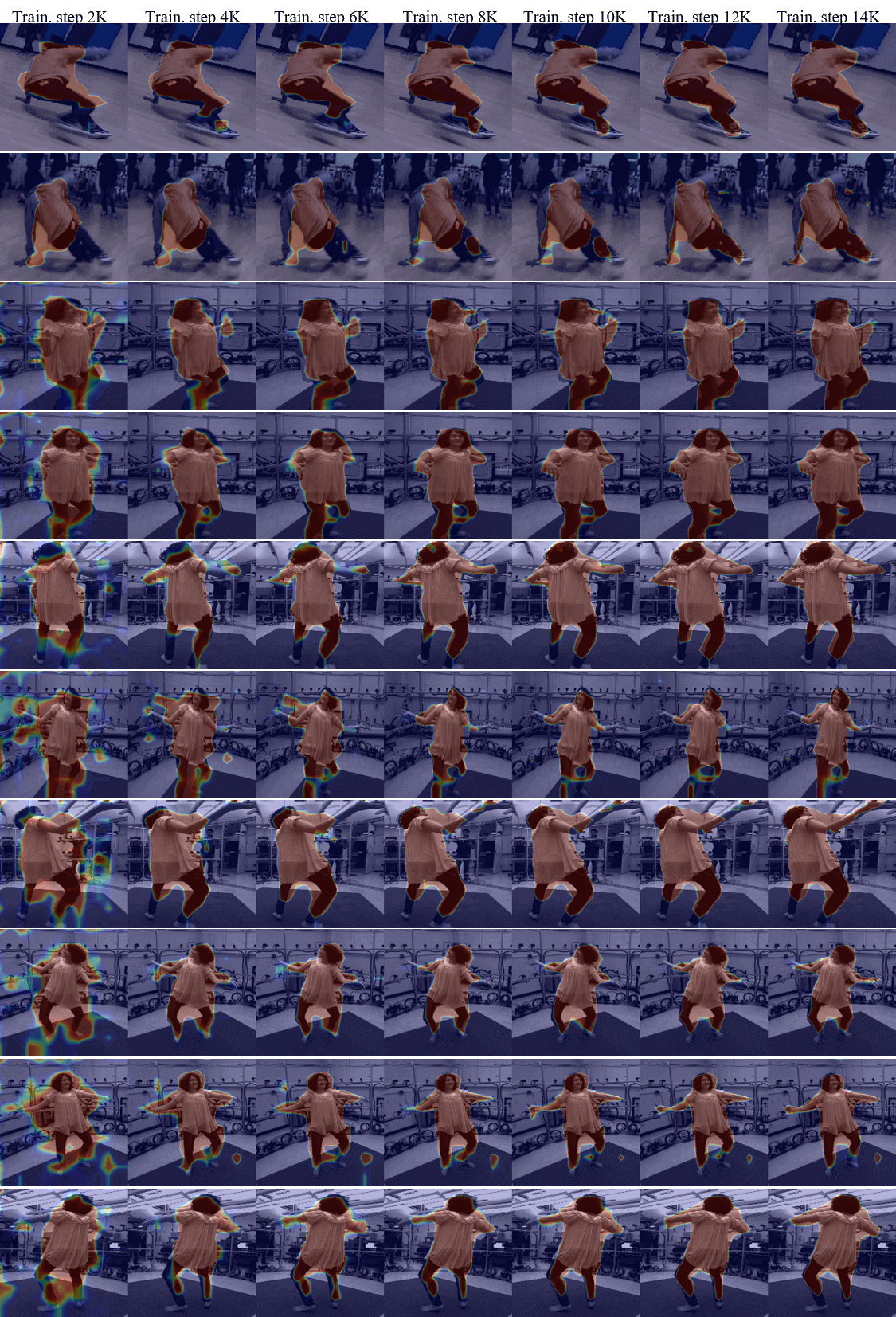}
\caption{Qualitative result of multiview segmentation.}
\label{Fig:compare2}
\end{figure*}

\begin{figure*}
\centering 
\includegraphics[width=2\columnwidth]{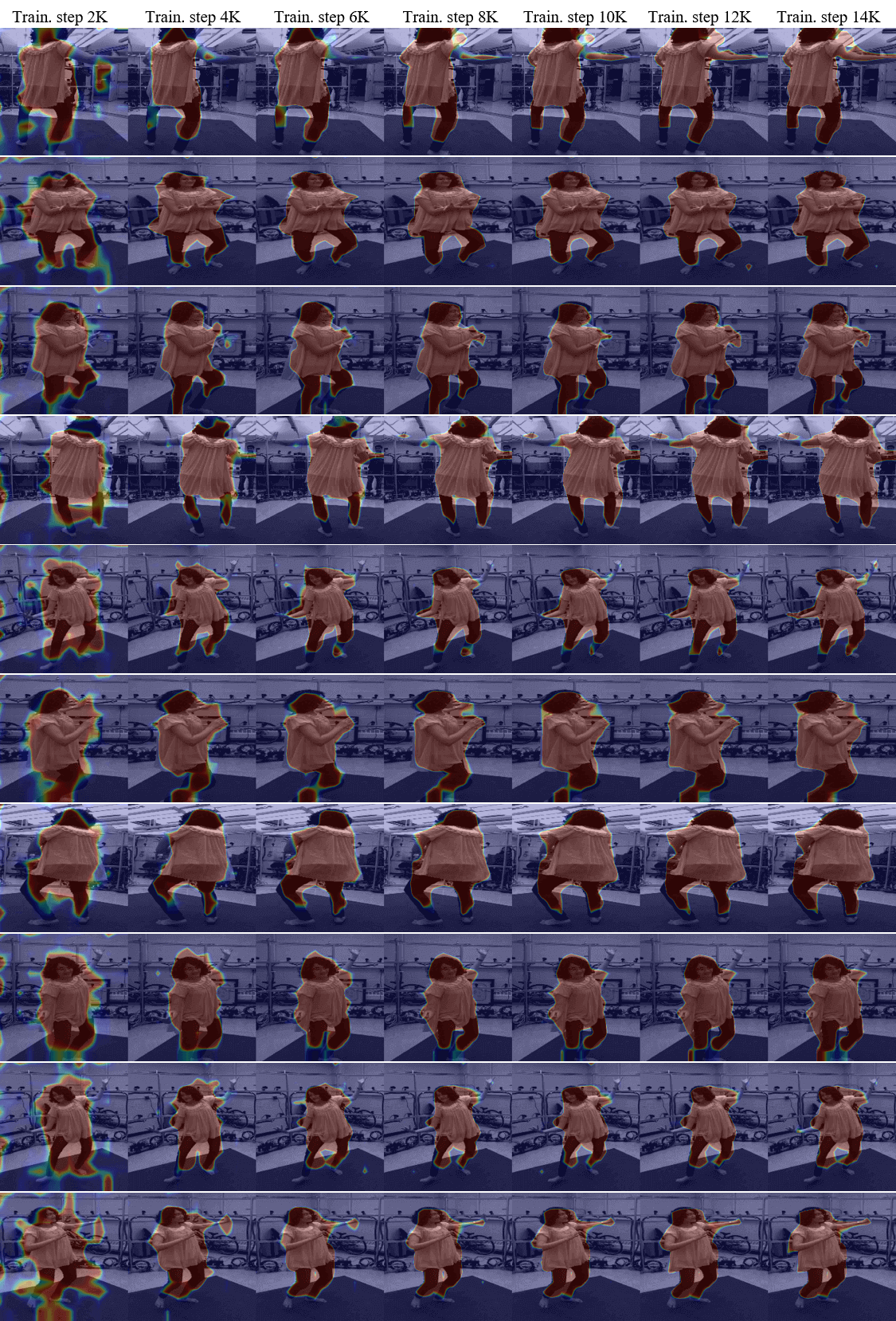}
\caption{Qualitative result of multiview segmentation.}
\label{Fig:compare3}
\end{figure*}

\begin{figure*}
\centering 
\includegraphics[width=2\columnwidth]{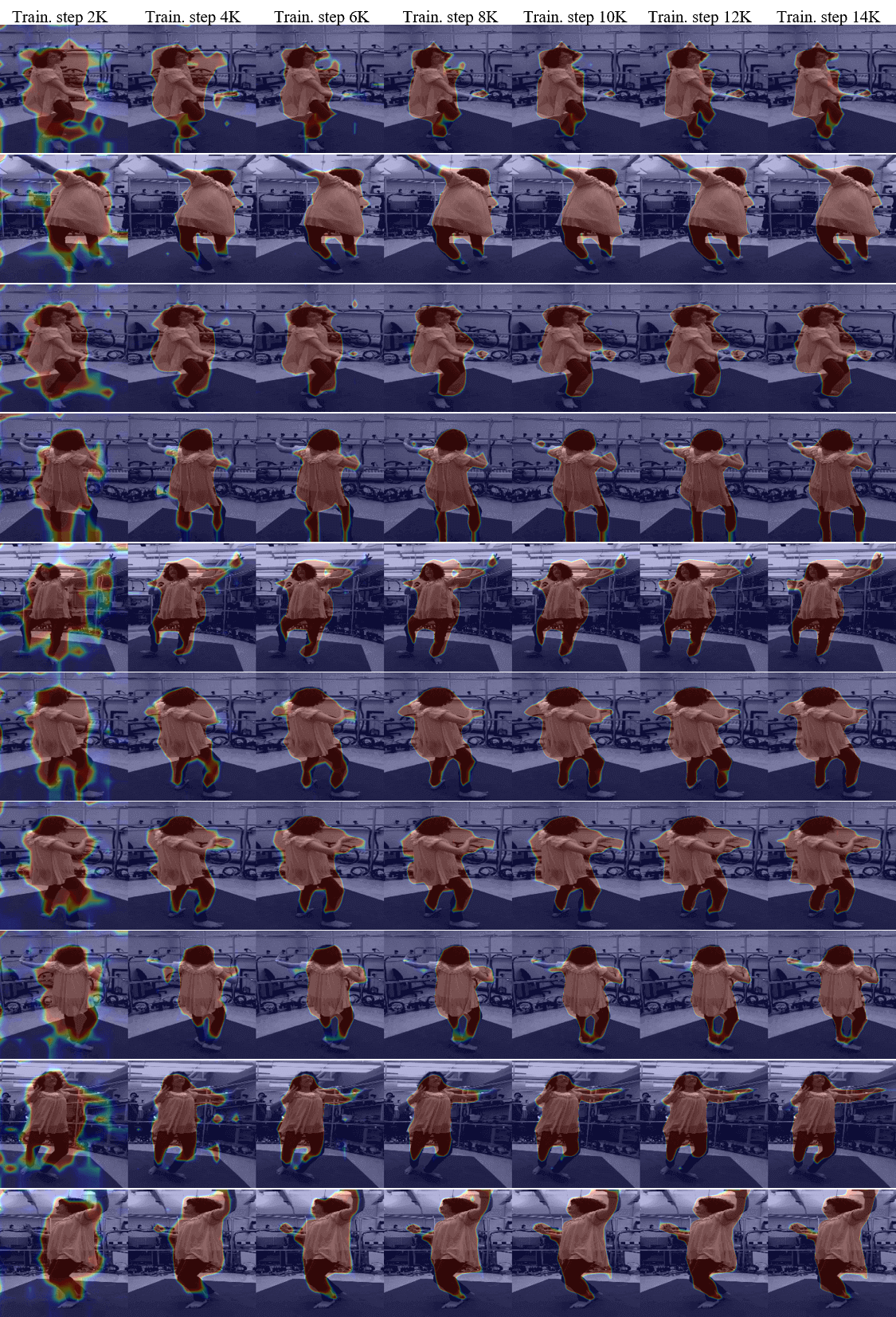}
\caption{Qualitative result of multiview segmentation.}
\label{Fig:compare4}
\end{figure*}

\begin{figure*}
\centering 
\includegraphics[width=2\columnwidth]{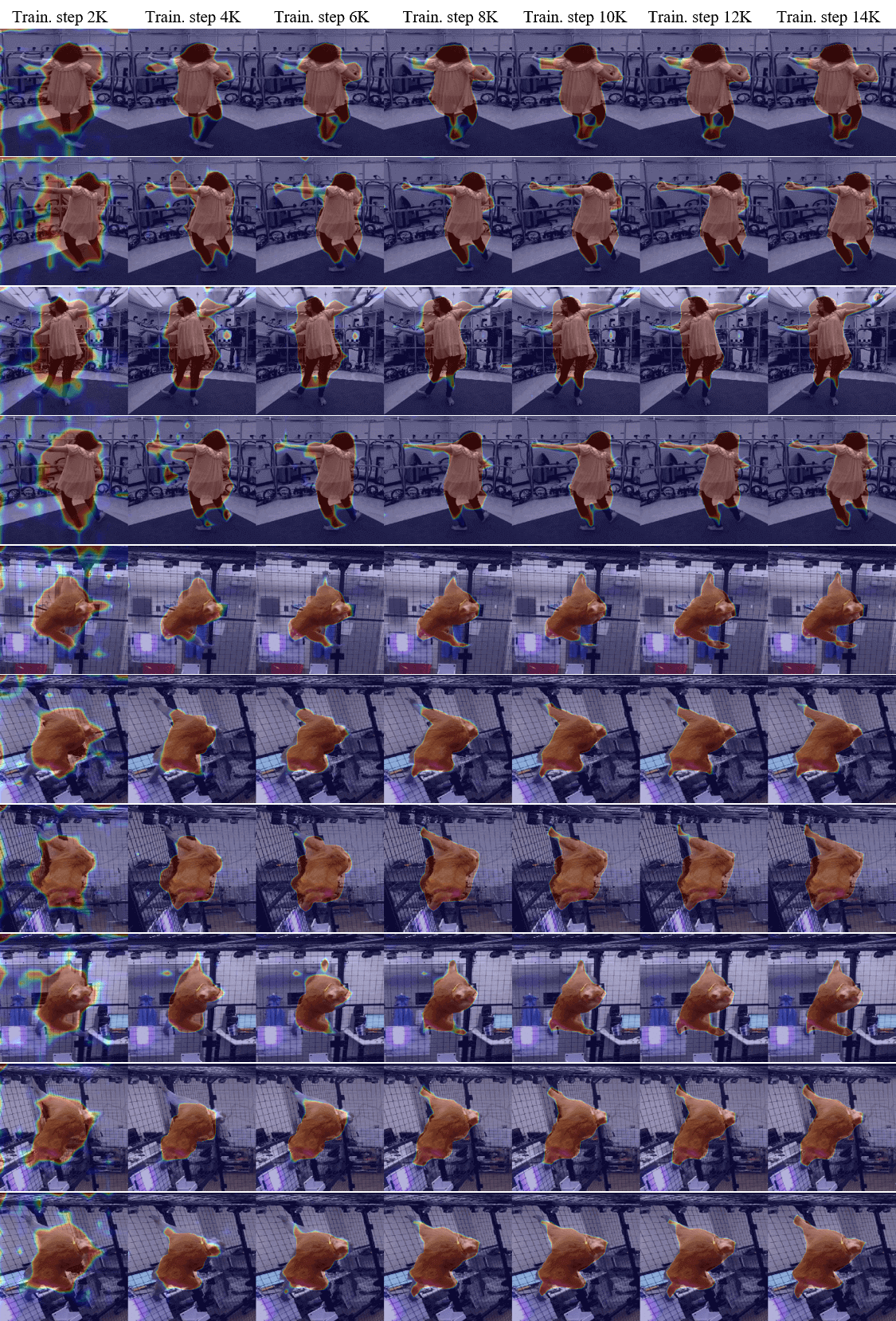}
\caption{Qualitative result of multiview segmentation.}
\label{Fig:compare5}
\end{figure*}

\begin{figure*}
\centering 
\includegraphics[width=2\columnwidth]{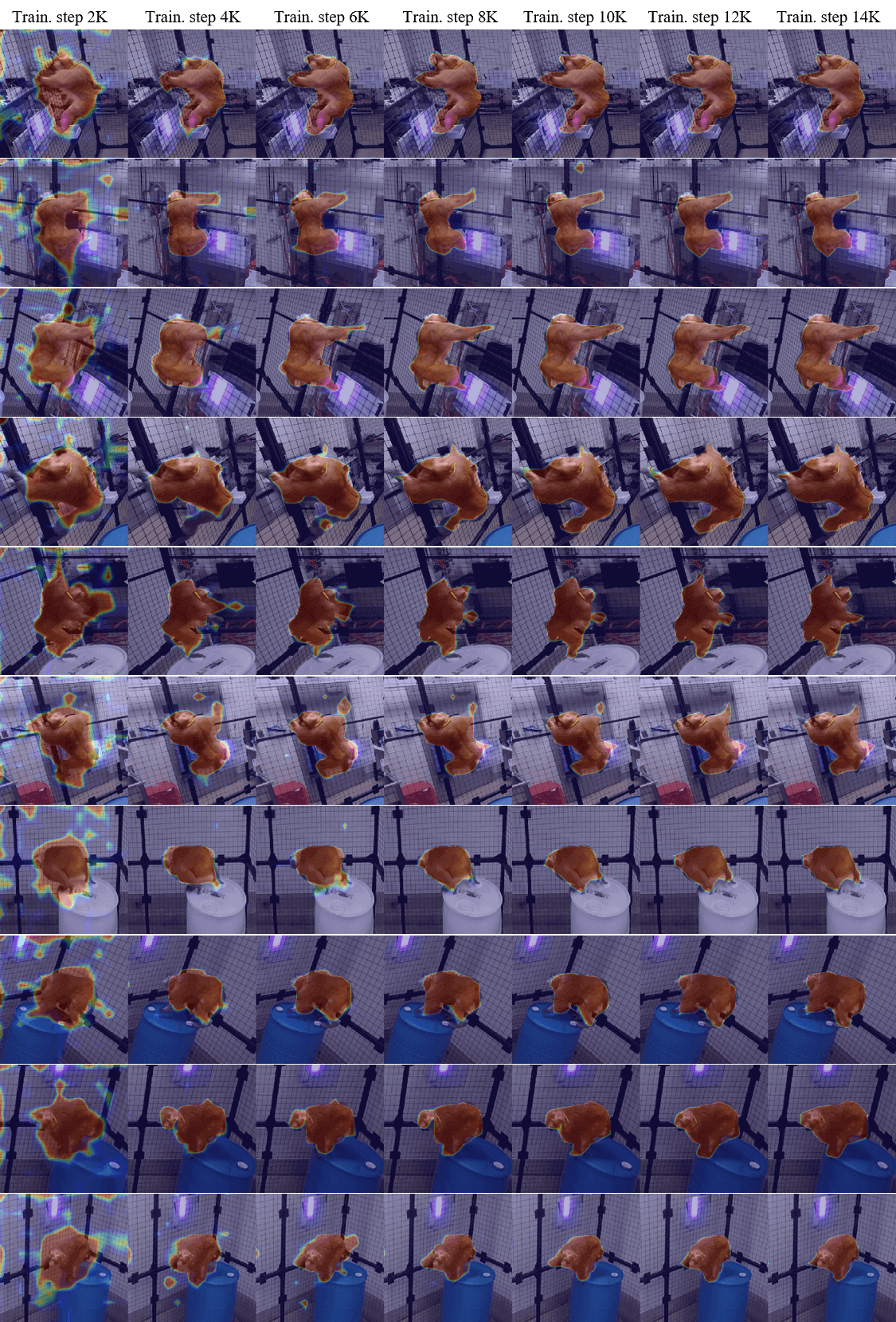}
\caption{Qualitative result of multiview segmentation.}
\label{Fig:compare6}
\end{figure*}

\begin{figure*}
\centering 
\includegraphics[width=2\columnwidth]{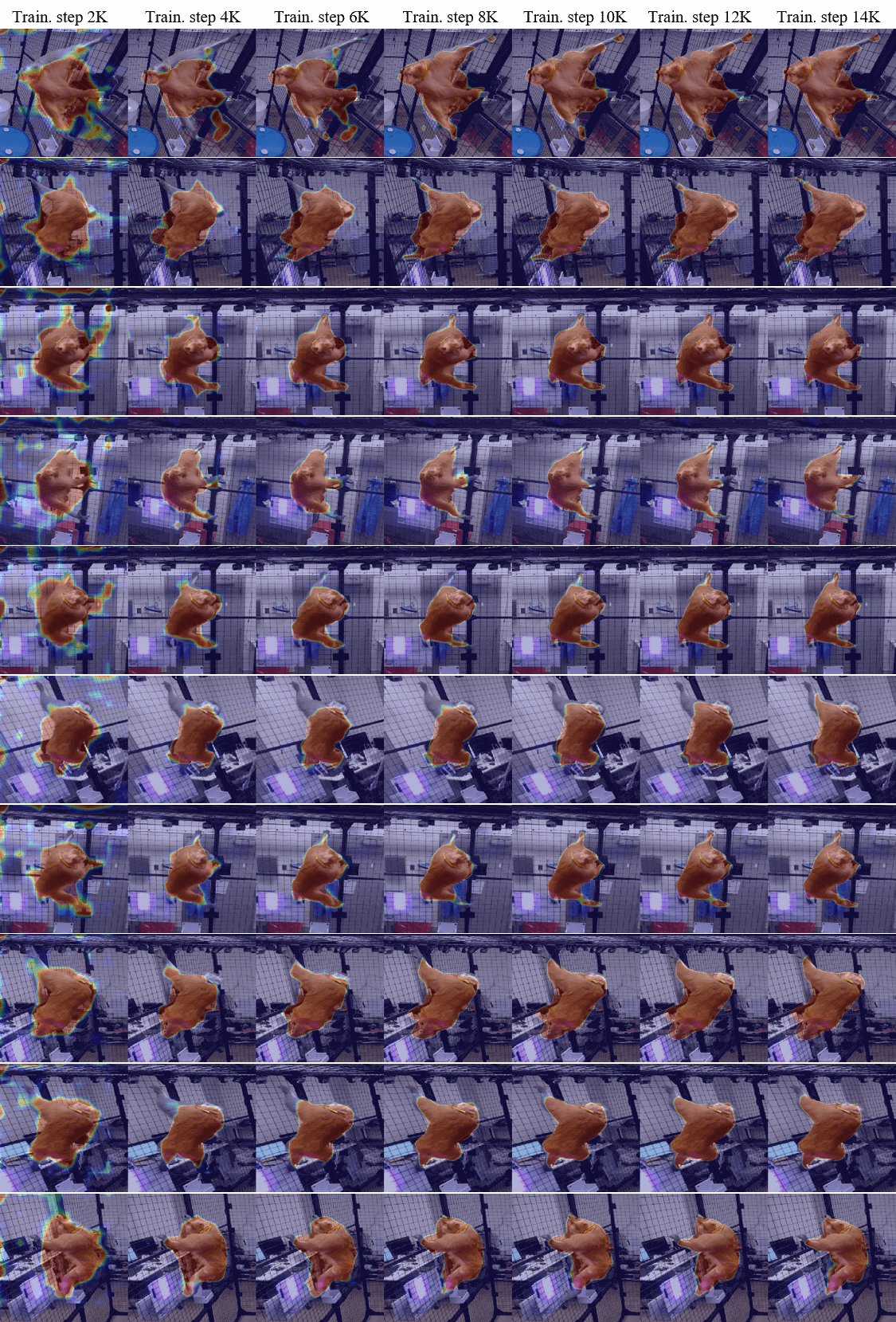}
\caption{Qualitative result of multiview segmentation.}
\label{Fig:compare7}
\end{figure*}

\begin{figure*}
\centering 
\includegraphics[width=2\columnwidth]{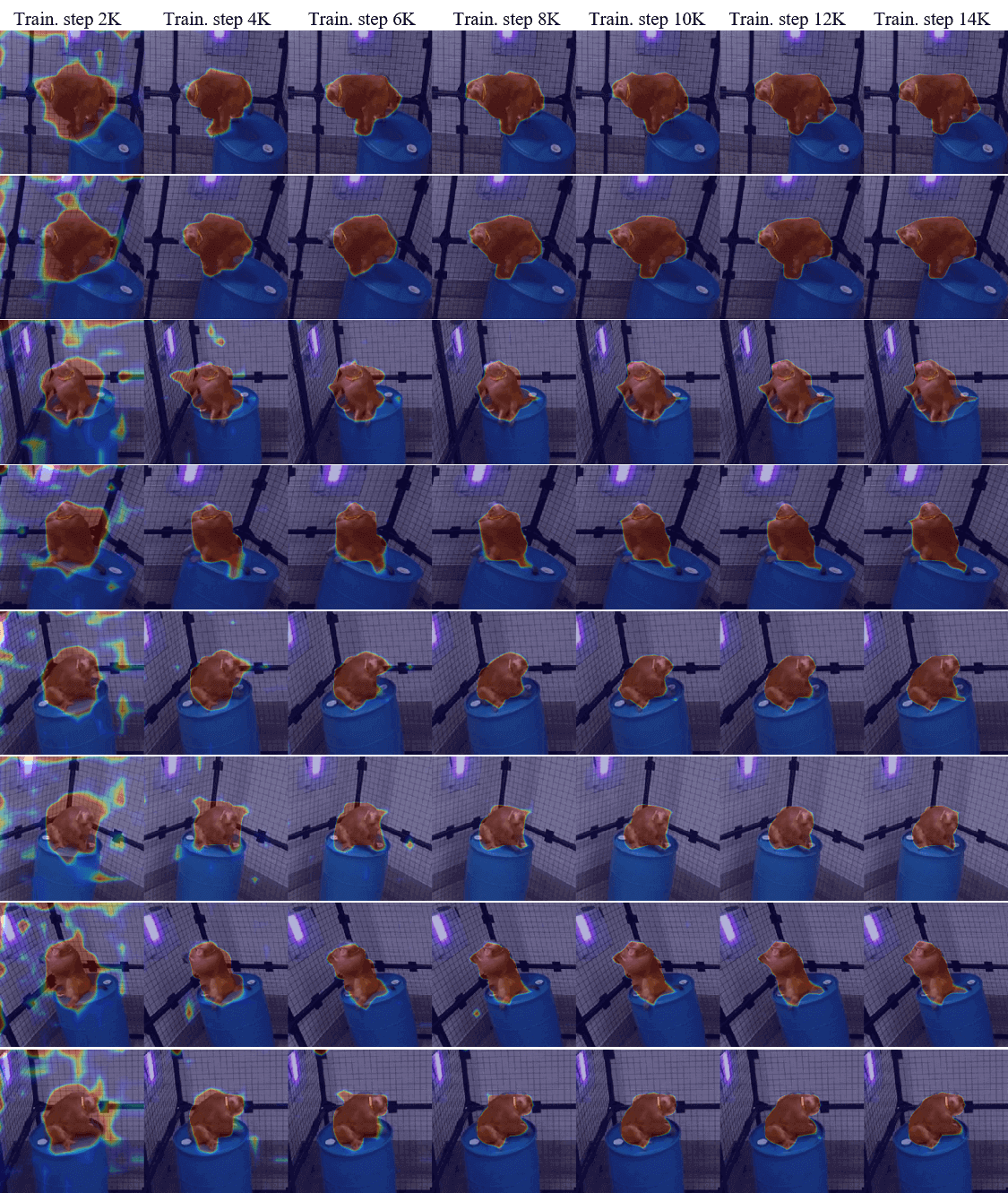}
\caption{Qualitative result of multiview segmentation.}
\label{Fig:compare8}
\end{figure*}

\begin{figure*}
\centering 
\includegraphics[width=2\columnwidth]{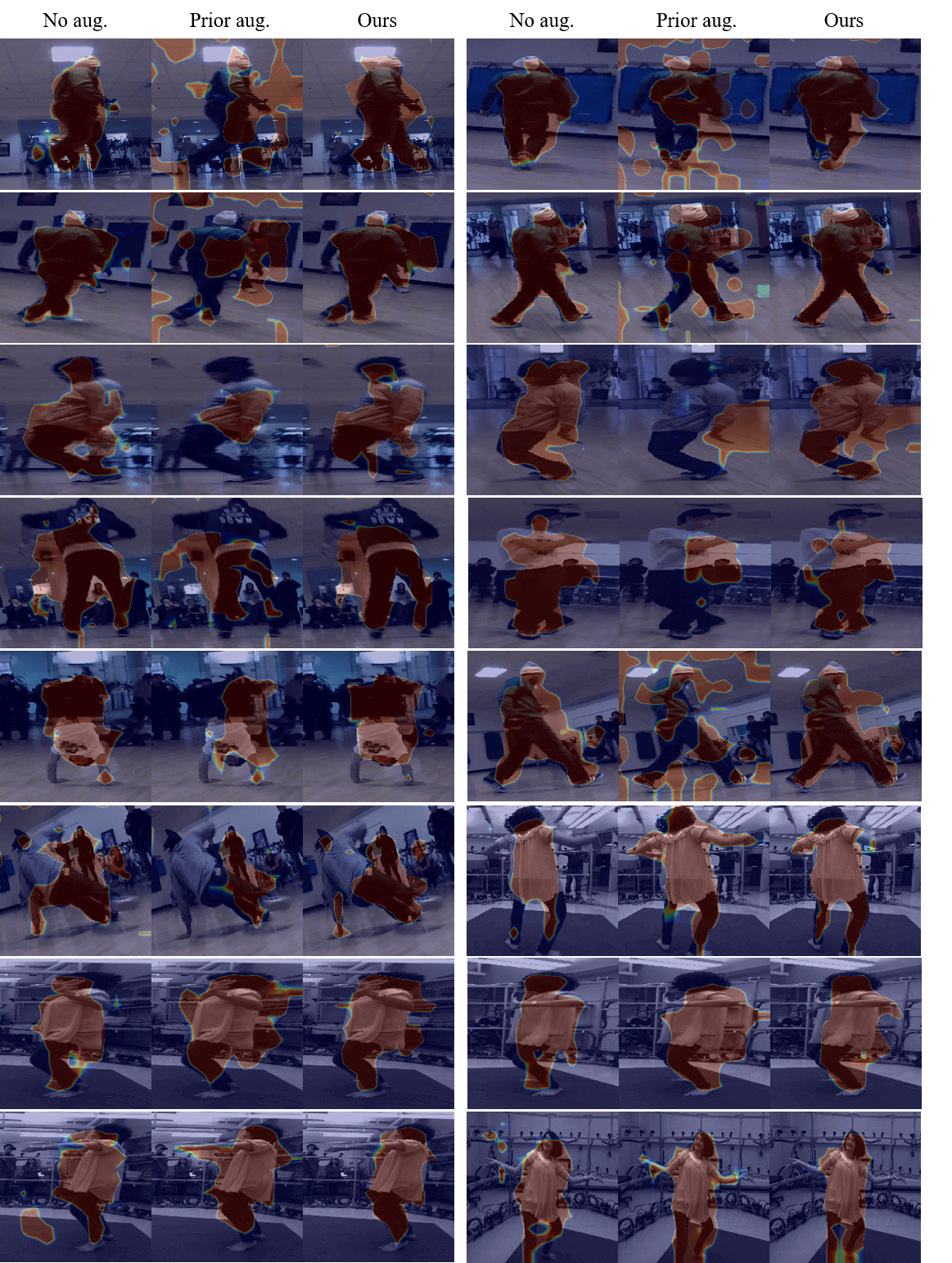}
\caption{Qualitative result of multiview segmentation.}
\label{Fig:b1}
\end{figure*}

\begin{figure*}
\centering 
\includegraphics[width=2\columnwidth]{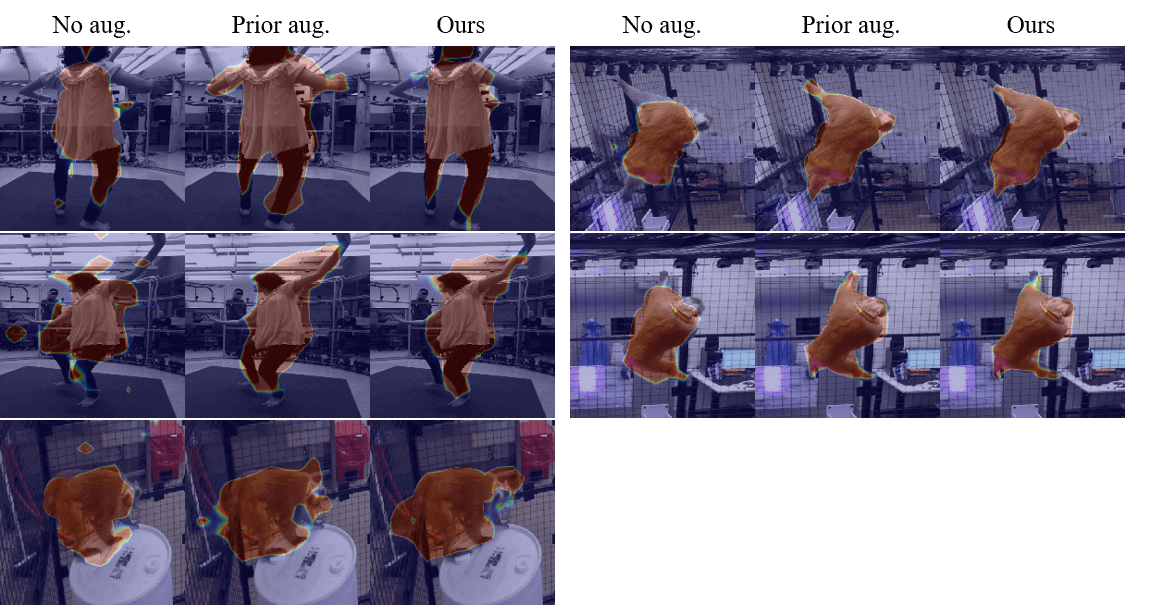}
   \caption{Failure cases.}
\label{Fig:b2}
\end{figure*}

\end{document}